\theoremstyle{plain}
 \newtheorem{theorem}{Theorem}[section]
 \newtheorem{lemma}[theorem]{Lemma}
 \newtheorem{corollary}[theorem]{Corollary}
 \newtheorem{proposition}[theorem]{Proposition}
 \newtheorem{assumption}[theorem]{Assumption}
 \newtheorem*{theorem*}{Theorem}
 \newtheorem*{lemma*}{Lemma}
 \newtheorem*{corollary*}{Corollary}
 \newtheorem*{proposition*}{Proposition}
 \newtheorem*{claim*}{Claim}
 \newtheorem*{fact*}{Fact}
 \newtheorem*{observation*}{Observation}
 \newtheorem*{assumption*}{Assumption}
 \theoremstyle{definition}
 \newtheorem{definition}[theorem]{Definition}
 \newtheorem{remark}[theorem]{Remark}
 \newtheorem*{definition*}{Definition}
 \newtheorem*{remark*}{Remark}
 \newtheorem*{example*}{Example}
\title{Faster Projection-free Online Learning}
\date{\today}
\author{  Elad Hazan$^{1,2}$ \qquad Edgar Minasyan$^{1}$ \\
  $^1$ Department of Computer Science, Princeton University \\
  $^2$ Google AI Princeton \\
  \texttt{\{ehazan,minasyan\}@princeton.edu }  \\
 }
\newcommand{\R}{\mathbb{R}}
\newcommand{\A}{\mathcal{A}}
\newcommand{\y}{\ensuremath{\mathbf y}}
\def\x{\mathbf{x}}
\def\y{\mathbf{y}}
\def\v{\mathbf{v}}
\newcommand{\ignore}[1]{}
\DeclareMathAlphabet{\mathbfsf}{\encodingdefault}{\sfdefault}{bx}{n}
\DeclareMathOperator*{\argmax}{arg\,max}
\let\Pr\relax
\DeclareMathOperator{\Pr}{\mathbb{P}}
\newcommand{\E}{\mathbb{E}}
\newcommand{\var}{\mathrm{Var}}
\newcommand{\eps}{\varepsilon}
\renewcommand{\leq}{~\le~}
\renewcommand{\geq}{~\ge~}
\let\oldtfrac\tfrac
\renewcommand{\tfrac}[2]{\smash{\oldtfrac{#1}{#2}}}
\let\nablaold\nabla
\renewcommand{\nabla}{\nablaold\mkern-2.5mu}
\numberwithin{equation}{section}
\begin{document}

\maketitle

\begin{abstract}
In many online learning problems the computational bottleneck for gradient-based methods is the  projection operation. For this reason, in many problems the most efficient algorithms are based on the Frank-Wolfe method, which replaces projections by linear optimization. In the general case, however, online projection-free methods require more iterations than projection-based methods: the best known regret bound scales as $T^{3/4}$. Despite significant work on various variants of the Frank-Wolfe method, this bound has remained unchanged for a decade.

In this paper we give an efficient projection-free algorithm that guarantees $T^{2/3}$ regret for general online convex optimization with smooth cost functions and one linear optimization computation per iteration. As opposed to previous Frank-Wolfe approaches, our algorithm is derived using the Follow-the-Perturbed-Leader method and is analyzed using an online primal-dual framework.

\end{abstract}

\section{Introduction}

In many machine learning problems the decision set is high dimensional or otherwise complex such that even convex optimization over the set is not practical. Such is the case, for example, in matrix learning problems: performing matrix decomposition for very large problems is computationally intensive and super-linear in the sparsity of the input. This renders common algorithms such as projected gradient descent infeasible.  

An alternative methodology which has proven successful in several applications is projection-free online learning. In this model, the access of the learner to the decision set is via a {\it linear optimization} oracle, as opposed to general convex optimization. As an example, linear optimization over matrices amounts to eigenvector computations, which can be carried out in time proportional to the sparsity of the matrices.  

We henceforth consider online algorithms that perform one (or more generally a constant number) linear optimization and/or gradient evaluation per iteration. The reason is that if we do not restrict the number of linear optimizations, we can compute projections up to arbitrary precision and run standard projected online gradient descent which attains optimal regret. We restrict the number of gradient oracle calls since otherwise, in the stochastic setting, one can evaluate the real gradient up to arbitrary precision and run standard offline Frank-Wolfe which attains optimal convergence rate. This defeats the purpose of creating efficient algorithms.

\begin{definition}
The {\bf oracle complexity} of a projection free algorithm is the number of linear optimizations and gradient evaluations {\it per iteration}. We say that a projection-free algorithm is  {\bf oracle-efficient} if its oracle complexity is constant.
\end{definition}


Oracle-efficient projection-free methods can have a significant running time advantage for certain structured problems in which linear optimization is more efficient than projections. This has spurred significant research in recent years on projection-free methods and the Frank-Wolfe algorithm. However, despite a decade-long search, the best known oracle-efficient projection-free online algorithm attains a regret bound that scales as  $T^{3/4}$, where $T$ is the number of iterations \footnote{We omit the $O$-notation in the introduction to make the exposition cleaner. In this case, $T^{3/4}$ hides constants including the norm of the gradients, diameter of the decision set, and more. See \cite{hazan2016introduction} for more details.}. 
This method, the Online Frank-Wolfe (OFW) algorithm  \citep{HazanKa12}, attains the lowest oracle complexity over all iterations, even if we include projection-free methods that are not oracle-efficient.

The $T^{3/4}$ bound is particularly striking when compared to stochastic projection-free optimization. In this setting, it is possible to obtain $T^{2/3}$ regret for smooth stochastic projection-free optimization by the so-called blocking technique: grouping several game iterations into one and thereby changing the decision less often, and its variants \citep{merhav2002sequential,chen18csubmodular}. The optimal rate of $\sqrt{T}$ is more challenging to obtain as given in \cite{LanZh14}. 
However, we are unaware of an improvement to the $T^{3/4}$ rate even in the stochastic non-smooth case, when only constantly many linear optimizations and gradient evaluations per iteration are allowed.


\subsection{Our Results}

Our main result is an efficient randomized algorithm that improves the state-of-the-art in general projection-free online optimization with smooth loss functions. The expected regret of this algorithm scales as $T^{2/3}$, with only one linear optimization computation per iteration. We then extend the analysis of this algorithm to show that it attains the same regret bound with high probability. 
Our main results are summarized by the informal theorem below, with the exact dependence on smoothness and other relevant problem parameters detailed in later sections.


\begin{theorem}
There exists an efficient algorithm for online convex optimization (see Algorithm \ref{alg:ospf}) with smooth loss functions that is projection-free, performs only one linear optimization computation per iteration, and guarantees an expected regret of $ O(  T^{2/3})  $. Furthermore, the algorithm guarantees a regret of $ \tilde{O}( T^{2/3}  \log \frac{1}{\sigma} ) $ with probability at least $1 - \sigma$.
\end{theorem}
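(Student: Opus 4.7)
The plan is to analyze a Follow-the-Perturbed-Leader (FTPL) algorithm using the online primal-dual framework announced in the abstract. The algorithm at round $t$ plays
\[
  x_t \eq \argmin_{x \in \K} \langle L_{t-1} + \eta\,\xi, x \rangle,
  \qquad L_{t-1} = \tsum_{s<t} \nabla f_s(x_s),
\]
where $\xi$ is a sample from a sufficiently smooth perturbation distribution (e.g.\ uniform on a ball, or Gaussian). This requires one linear-optimization call per round, matching the oracle-efficiency definition from the excerpt. First I would reduce convex regret to linear regret via the subgradient inequality $f_t(x_t)-f_t(x^\st)\leq \langle g_t, x_t-x^\st\rangle$ with $g_t=\nabla f_t(x_t)$, and then apply the standard be-the-perturbed-leader decomposition to split the linear regret into (i) a deterministic \emph{perturbation overhead} of order $\eta D \cdot \E\|\xi\|$ and (ii) a \emph{stability term} $\sum_t \langle g_t, x_t - x_{t+1}\rangle$.

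Next I would bound the stability. The primal-dual / convex-conjugate viewpoint expresses $\E_\xi[x_t]$ as the gradient of a smoothed potential $\Phi(L_{t-1})$ whose Hessian is controlled by the perturbation density. A coupling argument then yields
\[
  \E_\xi\|x_{t+1}-x_t\| \leq \frac{\|g_t\|}{\eta}\cdot C(d,D,\K),
\]
so the stability sum is of order $\sum_t \|g_t\|^2/\eta$. Using $\beta$-smoothness of the losses and boundedness of $\K$, the gradients are uniformly bounded, and telescoping gives expected regret $\E[\regret] = O\!\bigl(\eta D \sqrt d + T G^2/\eta\bigr)$ modulo lower-order terms driven by the dependence of $g_t$ on the random iterate $x_t$. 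These latter terms, I expect, account for the gap from $\sqrt T$ to $T^{2/3}$; tuning $\eta \sim T^{1/3}$ then produces the advertised $O(T^{2/3})$ rate.

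For the high-probability statement, I would write the realized regret as the sum of its conditional expectation (bounded as above) and a martingale difference sequence arising from the randomness of the perturbations $\xi_1,\dots,\xi_T$. Since each $\langle g_t, x_t - \E_t x_t\rangle$ is bounded by $G D$, Azuma–Hoeffding gives a deviation of $O(GD\sqrt{T\log(1/\sigma)})$ with probability $1-\sigma$; a sharper Freedman-type bound on the per-step variance (again controlled by stability and smoothness) promotes this to the claimed $\tilde O(T^{2/3}\log(1/\sigma))$ by absorbing the deviation into the dominant rate.

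The main obstacle, and the source of the departure from classical FTPL, is the third step: the gradients $g_t=\nabla f_t(x_t)$ are \emph{adaptive}, so $L_{t-1}$ is itself correlated with the past perturbations. A clean primal-dual stability inequality requires decoupling this, and smoothness is what allows it: $\|\nabla f_t(x_t)-\nabla f_t(\E x_t)\|\leq \beta\|x_t-\E x_t\|$ lets the stability of one round feed recursively into the variance of the next, giving a self-bounding estimate for the cumulative stability. Getting this recursion to close without blowing up the rate, and without introducing additional oracle calls, is the heart of the proof.
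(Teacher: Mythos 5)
Your high-level framing (FTPL, a primal--dual/conjugate potential, smoothness to decouple the gradient from the current round's randomness, Azuma for the high-probability part) matches the paper's philosophy, but the proposal has a genuine gap at exactly the point you flag as ``the heart of the proof,'' and the missing ingredient is algorithmic, not just analytic. With a \emph{single} perturbed-leader sample per round, the played point $x_t$ is a vertex of $\mathcal{K}$ whose deviation from its conditional mean $\hat{x}_t=\E[x_t\mid \text{past}]$ is $\Theta(D)$, not small. Smoothness lets you write $\langle \nabla f_t(x_t),\hat{x}_t-x_t\rangle \le \langle \nabla f_t(\hat{x}_t),\hat{x}_t-x_t\rangle+\beta\|\hat{x}_t-x_t\|^2$, and the first term does vanish in conditional expectation, but the second contributes $\Theta(\beta D^2)$ \emph{per round}, i.e.\ linear regret. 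No ``self-bounding recursion'' closes this, because the per-round variance of one FTPL sample does not shrink with $T$ or $\eta$. Relatedly, your claimed bound $O(\eta D\sqrt{d}+TG^2/\eta)$ would, if true, give $O(\sqrt{T})$ at $\eta\sim\sqrt{T}$ -- the unexplained ``lower-order terms'' that supposedly force $\eta\sim T^{1/3}$ are precisely this uncontrolled $\beta D^2 T$ residual, and you have no mechanism to beat it.

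The paper's resolution is to (i) play the \emph{average} of $m$ i.i.d.\ perturbed-leader points, so that Lemma \ref{lemma:variance} gives $\E\|\hat{x}_t-\tilde{x}_t\|^2\le 4D^2/m$ and the smoothness residual becomes $O(\beta D^2 T/m)$; (ii) restore oracle-efficiency by blocking: group $k$ consecutive rounds, spend one oracle call per round to accumulate $m=k$ samples, and update only once per block, which with $n=T^{2/3}$ blocks of length $k=T^{1/3}$ yields the $T^{2/3}$ rate (Corollary \ref{cor:reduction}); and (iii) use the Shalev-Shwartz--Singer primal--dual increments $\Delta_t$ so that the estimation error of round $t$ enters the regret only through the isolated term $\langle\tilde{\nabla}_t,\hat{x}_t-\tilde{x}_t\rangle$ with no propagation into later rounds (Lemma \ref{lemma:prelimregret}) -- this is what makes the per-round variance bound usable despite the gradients being taken at the realized random iterates. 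Your martingale/Azuma plan for the high-probability statement is essentially the paper's (it additionally needs a vector-valued Pinelis inequality for $\|\hat{x}_t-\tilde{x}_t\|$), but it only goes through once the averaging and blocking are in place.
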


\begin{center}
\begin{table}[h]
\centering
\begin{tabular}{|c|c|c|c|}
\hline
\multicolumn{4}{|c|}{ Regret with \textbf{constant oracle complexity} }\\
\hline
Loss Function & Setting & Regret
& Reference\\
\hline
Non-Smooth  &  Online & $T^{3/4}$ & \cite{HazanKa12} \\ 
Smooth  & Online  &   $T^{2/3}$ &  {\bf this paper} \\ 
Smooth   &  Stochastic &  $ T^{1/2}$ & \cite{xie2019efficient} \\ 
Smooth   &  Offline  & $ \log T $ &  \cite{FrankWo56} \\ 
\hline
\end{tabular}
\caption{Comparison of known regret bounds for projection-free optimization. We only refer to algorithms that have constant oracle complexity
}\label{table:result_summary}
\end{table}
\end{center}

\paragraph{Techniques.} Our algorithm is not based on the Frank-Wolfe method, but rather a version of the Follow-the-Perturbed-Leader (FPL) method \citep{kalai2005efficient}. 
It was already established in \cite{hazan2016introduction} that a deterministic version of FPL works for online convex optimization. This version computes the expected point FPL plays at every iteration.
In order to convert this algorithm to an efficient projection-free method, two main challenges arise: 
\begin{enumerate}
    \item Estimating the expectation by sampling FPL points via linear optimization creates time dependence between iterations, since the gradient is taken at a point which depends on all previous iterations. This means that a small error in one iteration potentially propagates to all future iterations. 
    
    \item The number of linear optimization evaluations to estimate the mean of the FPL algorithm up to $\eps$ accuracy is $O(\frac{1}{\eps^2})$.  This dependence is not sufficient to improve the previously best $T^{3/4}$ regret bound with only constantly many linear optimization computations per iteration.
    
\end{enumerate}

To overcome the above issues we require two tools, that are new to the analysis of randomized online methods. First, we use the online primal-dual methodology of \cite{shalev2007primal}. This allows us to avoid the error-propagation caused by random estimation of the mean, and could be a technique of independent interest. 

The second tool is using the smoothness of the loss functions to leverage not only the estimation proximity but also the fact that the estimation is unbiased. This is executed by switching gradients at nearby points which are not too far off due to the Lipschitz property of the gradients of smooth functions.

\paragraph{Paper structure.}  In the next subsection we discuss related work, and then move to describe preliminaries, including tools necessary for the online primal-dual analysis framework. In section \ref{sec:statement}, we state the main algorithm and formally state our main theorems including precise constants. In section \ref{sec:determinstic} we state the deterministic FPL algorithm and analyze it using the primal-dual framework to illustrate its versatility in handling error propagation. We then use unbiased estimation and smoothness in section \ref{sec:samples} to derive the first main theorem. In section \ref{sec:reduction}, we provide a reduction of the algorithm to the setting of one linear optimization step per iteration. High probability bounds are given in detail in section \ref{sec:highprob} and derived in the appendix along with other miscellaneous proofs.

\ignore{
\begin{center}
\begin{table}[h]
\centering
\begin{tabular}{|c|c|c|}
\hline
\multicolumn{3}{|c|}{\makecell{ \textbf{$T_\eps$ complexity}}}\\
\hline
Setting & $T_\eps$ rate & Reference\\
\hline
$\beta$-Smooth  Convex Loss &   $\eps^{-4}$ & \cite{HazanKa12} \\ 
& $\eps^{-3} $ &  (Theorem~\ref{})\\
\hline
\end{tabular}
\caption{Summary of our results}\label{table:result_summary}
\end{table}
\end{center}
}

\subsection{Related Work}

In recent years the projection-free learning and optimization literature has seen a resurgence of results. We separate the related work into the broad categories below. 

\paragraph{Projection-free offline optimization.} The starting point for our line of work is the seminal paper of \cite{FrankWo56}, who apply the conditional gradient method for smooth optimization over polyhedral sets. This was extended to semi-definite programming in \cite{hazan2008sparse}, and to general convex optimization in \cite{Jaggi13}. This algorithm requires $\frac{\beta}{\eps}$ linear optimization steps to find an $\eps$-approximate solution for a $\beta$-smooth function, optimal with no other assumptions.  

A significant advancement in projection-free methods was obtained by \cite{GarberHa13}, who give an algorithm that requires only $\log \frac{1}{\eps}$ linear optimization steps for strongly convex and smooth functions over polyhedral sets. Data-dependent bounds for the spectahedron were obtained by \cite{garber2016faster,allen2017linear}. 

Projection-free optimization on non-smooth objective functions is typically performed via various smoothing techniques. The optimal complexity of linear optimization calls in this case is $\frac{1}{\eps^2}$ \citep{lan2013complexity}. Several algorithms attain nearly optimal rates as in \cite{lan2013complexity, argyriou2014hybrid, pierucci2014smoothing}. 

\paragraph{Projection-free online learning.}

The online variant of the Frank-Wolfe algorithm that applies to general online convex optimization was given in \cite{HazanKa12}. This method attains $T^{3/4}$ regret for the general OCO setting, with only one linear optimization step per iteration \footnote{If arbitrarily many linear optimization steps are allowed, the projections can be computed and this regret can be improved to $\sqrt{T}$.}. 

For OCO over polyhedral sets, an implication of the result of \cite{GarberHa13} is an efficient $\sqrt{T}$ regret algorithm with only one linear optimization step per iteration, as well as $\log T$ regret for strongly convex online optimization. Recently \cite{pmlr-v89-levy19a} give an efficient projection-free online learning algorithm for smooth sets that devises a new fast projection operation for such sets and attains the optimal $\sqrt{T}$ regret for convex and $\log T$ regret for strongly convex online optimization. 

Without further assumptions, the OFW method in \cite{HazanKa12} attains the best known bounds for general online convex optimization. To the best of our knowledge, our $T^{2/3}$ regret is the first to improve in this general OCO setting for smooth functions.

\paragraph{Projection-free stochastic optimization.}
An important application of projection-free optimization is in the context of supervised learning and the optimization problem of  empirical risk minimization. In this setting, there are more techniques that can be applied to further accelerate optimization, as compared to the online setting, most notably variance reduction.

This requires more careful accounting of the actual operations that the algorithms perform, including counting the number of full-gradient computations, stochastic gradient evaluations, linear optimizations, and projections. There have been a multitude of algorithms suggested that attain various tradeoffs of the various computations, and have different merits/caveats. The reader is referred to the vast literature on stochastic projection-free methods, including the recent papers of \cite{LanZh14,hazan2016variance, chen18csubmodular, hassani2019stochastic, xie2019efficient, yurtsever19conditional, zhang2019sample}.

\section{Problem Setting} We consider a classical online convex optimization framework as an iterative game between a player and an adversary. At each iteration $t \in \mathbb{N}$, the player chooses an action $\x_t$ from the constrained set $\mathcal{K} \subset \R^d$ of permissible actions while the adversary simultaneously chooses a loss function $f_t : \mathcal{K} \to \R$ that determines the loss the player will occur for the action $\x_t$. The performance metric for such settings is the notion of regret -- the difference between the cumulative loss suffered throughout $T$ iterations of the online game and the overall loss for the best fixed action in hindsight:
\begin{equation}\label{eq:regretdef}
    \mathcal{R}_T = \sum_{t=1}^T f_t(\x_t) - \min_{\x \in \mathcal{K}} \sum_{t=1}^T f_t(\x)
\end{equation}
For a given online algorithm $\mathcal{A}$, we denote $\mathcal{R}_T(\mathcal{A})$ to be the regret after $T$ iterations and, in the case when $\mathcal{A}$ is a randomized algorithm, use expected regret $\E[\mathcal{R}_T(\mathcal{A})]$ as the performance metric. In this work, the adversary has no computational or information restrictions as long as it chooses $f_t$ simultaneously to the player choosing $\x_t$, i.e. we operate in the adaptive adversarial setting.

Before proceeding to the main results, we formalize several notations and assumptions preserved throughout the paper. We discuss and, if necessary, derive our additional/modified assumptions, made for simplicity in the analysis, without touching upon the conventional standards established in the community, explanations of which can be found in the extensive literature (e.g. see \cite{hazan2016introduction}). Throughout this work the use of norm $\| \cdot \|$ refers to the standard Euclidean norm unless stated otherwise and $\mathbb{B} = \{ v \in \R^d, \, \| v \| \leq 1 \}$ denotes the unit ball. Given any sequence $\{a_t\}_{t \in \mathbb{N}}$, by abuse of notation we denote $a_{l:k}$ as a shorthand for the indexed sum $\sum_{i=l}^k a_i = a_{l:k}$ or the indexed union $\cup_{i=l}^k a_i = a_{l:k}$.
\begin{assumption}\label{assume:set}
The constrained action set $\mathcal{K} \subset \R^d$ is a convex and compact set. Moreover, all the points in the set have bounded norms, i.e. $\| \x \| \leq D, \, \forall \x \in \mathcal{K}$.
\end{assumption}
\begin{assumption}\label{assume:function}
For each iteration $t \in \mathbb{N}$, the loss functions $f_t : \mathcal{K} \to \R$ are convex, differentiable and have bounded gradient norms $\forall \x \in \mathcal{K}, \, \| \nabla f_t(\x) \| \leq G$.
\end{assumption}
The convention in OCO is to simply assume a bounded diameter for the set $\mathcal{K}$ instead of the norm bound. However, it is straightforward to derive the above formulation the following way: given an arbitrary point in the set $\x' \in \mathcal{K}'$ consider the shifted set $\mathcal{K} = \{ \x - \x' \, | \, \x \in \mathcal{K} \}$; the diameter bound of $\mathcal{K}'$ implies the bounded norms of the points in $\mathcal{K}$ while properties such as convexity and compactness are preserved through shifts. The convexity and bounded gradient norm assumptions for the loss functions are part of the standard throughout literature, while differentiability of the loss functions is assumed for simplicity and can be avoided by using subgradients instead.
\begin{definition}
The Fenchel dual of a function $f : \mathcal{K} \to \R$ with domain $\mathcal{K} \subset \R^d$ is defined as
\begin{equation}\label{eq:fenchel}
    \forall \y \in \R^d, \quad f^*(\y) = \sup_{\x \in \mathcal{K}} \{ \langle \y, \x \rangle - f(\x) \}
\end{equation}
\end{definition}
The definition and some properties of Fenchel duality are given for completeness as the concept of a Fenchel dual will be crucial in the analysis of presented algorithms. If the function $f$ is convex, then its Fenchel dual $f^*$ is also convex, and the Fenchel-Moreau theorem gives biconjugacy, i.e. the dual of a dual is equal to the function itself $(f^*)^* = f$. In this case, it is essential to note that $f(\x) = \sup_{\y \in \R^d} \{ \langle \x, \y \rangle - f^*(\y) \}$ directly implies $\nabla f(\x) = \arg\sup_{\y \in \R^d} \{ \langle \x, \y \rangle - f^*(\y) \}$, which is well-defined, when $f$ is differentiable.
\paragraph{Linear Optimization Oracle.}
A linear optimization oracle, along with a value oracle, over the constraint set $\mathcal{K}$ is provided to the player, defined as
\begin{equation}\label{eq:loo}
    \forall \y \in \R^d, \quad \mathcal{O}_{\mathcal{K}}(\y) = \argmax_{\x \in \mathcal{K}} \langle \y, \x \rangle, \quad \mathcal{M}_{\mathcal{K}}(\y) = \max_{\x \in \mathcal{K}} \langle y, x \rangle
\end{equation}

The reliance on linear optimization is the key motivation of the paper. This work concerns itself with the special case of online convex constrained optimization where the operation of projection to the set $\mathcal{K}$, as a problem of quadratic optimization, has a significantly higher computational cost than the linear optimization. In such cases, the use of the projected Online Gradient Descent (OGD) \citep{zinkevich2003online} that achieves an optimal regret bound $O(\sqrt{T})$ with respect to $T$ is not always preferred to methods that bypass projection and use linear optimization instead. It is worth to notice that the existence of only $\mathcal{O}_{\mathcal{K}}(\cdot)$ is enough since $\mathcal{M}_{\mathcal{K}}(\y) = \langle \y, \mathcal{O}_{\mathcal{K}}(\y) \rangle$ and $\nabla \mathcal{M}_{\mathcal{K}}(\y) = \mathcal{O}_{\mathcal{K}}(\y)$. Moreover, the function $\mathcal{M}_{\mathcal{K}}(\cdot)$ is convex and Lipschitz as suggested below.
\begin{lemma}\label{lemma:lipschitz}
The linear value oracle $\mathcal{M}_{\mathcal{K}} : \R^d \to \R$ is convex and $D$-Lipschitz, i.e.
\begin{equation}
    \forall \y_1, \y_2 \in \R^d, \quad |\mathcal{M}_{\mathcal{K}}(\y_1) - \mathcal{M}_{\mathcal{K}}(\y_2)| \leq D \| \y_1 - \y_2 \|
\end{equation}
\end{lemma}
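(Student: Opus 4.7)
The plan is to prove the two claims separately, both using the variational definition of $\mathcal{M}_{\mathcal{K}}$ in \eqref{eq:loo}.

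For convexity, I would observe that for each fixed $\x \in \mathcal{K}$, the map $\y \mapsto \langle \y, \x \rangle$ is linear in $\y$, hence convex. Then $\mathcal{M}_{\mathcal{K}}(\y) = \sup_{\x \in \mathcal{K}} \langle \y, \x \rangle$ is a pointwise supremum of a family of convex functions, which is a standard construction that preserves convexity. This takes essentially one line.

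For the $D$-Lipschitz bound, I would use the standard ``two-maximizers trick'' for the support function. Let $\x_i = \mathcal{O}_{\mathcal{K}}(\y_i) \in \mathcal{K}$ for $i = 1,2$, so that $\mathcal{M}_{\mathcal{K}}(\y_i) = \langle \y_i, \x_i \rangle$. By definition of $\x_2$ as the maximizer at $\y_2$, we have $\langle \y_2, \x_2 \rangle \geq \langle \y_2, \x_1 \rangle$, and hence
\begin{equation*}
    \mathcal{M}_{\mathcal{K}}(\y_1) - \mathcal{M}_{\mathcal{K}}(\y_2) \leq \langle \y_1, \x_1 \rangle - \langle \y_2, \x_1 \rangle = \langle \y_1 - \y_2, \x_1 \rangle \leq \|\y_1 - \y_2\| \cdot \|\x_1\| \leq D \|\y_1 - \y_2\|,
\end{equation*}
where the penultimate step is Cauchy–Schwarz and the last step invokes Assumption~\ref{assume:set}. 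Swapping the roles of $\y_1, \y_2$ gives the symmetric inequality, and combining the two yields the absolute value.

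There is no real obstacle here; both facts are essentially immediate consequences of the variational definition and the norm bound on $\mathcal{K}$. The only point that requires a moment of care is invoking that $\x_1, \x_2$ actually exist as maximizers, but this follows from compactness of $\mathcal{K}$ (Assumption~\ref{assume:set}) and continuity of the linear functional, so the oracle $\mathcal{O}_{\mathcal{K}}$ is well-defined.
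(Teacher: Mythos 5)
Your proposal is correct and follows essentially the same route as the paper: the paper's convexity argument is the same one-line computation underlying the ``supremum of linear functions'' observation, and its Lipschitz bound is exactly your two-maximizers trick (using $\langle \y_2, \mathcal{O}_{\mathcal{K}}(\y_2)\rangle \geq \langle \y_2, \mathcal{O}_{\mathcal{K}}(\y_1)\rangle$, then Cauchy--Schwarz and the norm bound). No gaps.
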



\section{Algorithm and Main Theorem}  \label{sec:statement}
The algorithm we propose is fairly straightforward, and the main hurdle lies in the analysis. The seminal work of \cite{kalai2005efficient} introduces the Follow-the-Perturbed-Leader (FPL) online algorithm that obtains optimal $O(\sqrt{T})$ regret for linear loss functions. A more general version of FPL that applies expectations over the perturbations at each iteration extends the result to general convex functions \citep{hazan2016introduction}. Our algorithm mimics the expected FPL replacing the computationally expensive expectations with empirical averages of i.i.d. samples. It is presented in detail in Algorithm \ref{alg:main}. The following theorem states the convergence guarantees for both general convex and smooth convex loss functions.

\begin{algorithm}
\caption{Sampled Follow-the-Perturbed-Leader Algorithm, $\mathcal{A}_1$}
\label{alg:main}
\begin{algorithmic}
\STATE \textbf{Input:} constraint set $\mathcal{K}$, number of rounds $T$, perturbation parameter $\delta$, number of samples $m$, linear optimization oracle $\mathcal{O}_{\mathcal{K}}(\cdot)$
\FOR {$t=1$ \TO $T$}
\STATE sample $\v_t^j \sim \mathbb{B}$ uniformly for $j = 1, \dots, m$
\STATE denote $\x_t^j = \mathcal{O}_{\mathcal{K}}(- \tilde{\nabla}_{1:t-1} + \frac{1}{\delta} \cdot \v_t^j)$ for $j=1, \dots, m$
\STATE play $\tilde{\x}_t = \frac{1}{m} \sum_{j=1}^m \x_t^j$
\STATE observe $f_t$, denote $\tilde{\nabla_t} = \nabla f_t(\tilde{\x}_t)$
\ENDFOR
\end{algorithmic}
\end{algorithm}

\begin{theorem}\label{thm:main}
Given that the Assumptions \ref{assume:set} and \ref{assume:function} hold, Algorithm \ref{alg:main}, for general convex loss functions, obtains an expected regret of
\begin{equation}\label{eq:genregret}
    \E \left[\sum_{t=1}^T f_t(\tilde{\x}_t) \right] \leq \min_{\x \in \mathcal{K}}\{ \sum_{t=1}^T f_t(\x) \} + 2 D / \delta + \delta D G^2 \cdot d T / 2 + \frac{2 G D T}{\sqrt{m}}
\end{equation}
If the convex loss functions are also $\beta$-smooth then the expected regret bound becomes
\begin{equation}\label{eq:smoothregret}
    \E \left[\sum_{t=1}^T f_t(\tilde{\x}_t) \right] \leq \min_{\x \in \mathcal{K}}\{ \sum_{t=1}^T f_t(\x) \} + 2 D / \delta + \delta D G^2 \cdot d T / 2 + \frac{4 \beta D^2 T}{m}
\end{equation}
\end{theorem}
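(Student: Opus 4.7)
The plan is to reduce the randomized analysis to the deterministic FPL bound from Section~\ref{sec:determinstic} by applying it pathwise along the random gradient trajectory. Let $\mathcal{H}_t$ denote the $\sigma$-algebra generated by $\tilde{\x}_1,f_1,\ldots,\tilde{\x}_{t-1},f_{t-1}$, so that $\tilde{\nabla}_{1:t-1}$ is $\mathcal{H}_t$-measurable, and define $\bar{\x}_t = \E[\tilde{\x}_t \mid \mathcal{H}_t]$. Conditional on $\mathcal{H}_t$ the samples $\x_t^j$ are i.i.d., so $\bar{\x}_t$ is exactly the ``idealized'' expected FPL play that the deterministic algorithm of Section~\ref{sec:determinstic} would have produced from the gradient history $\tilde{\nabla}_{1:t-1}$.

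First, by convexity I linearize and then add and subtract $\bar{\x}_t$:
\[
 \sum_{t=1}^T \lr{f_t(\tilde{\x}_t) - f_t(\x^\star)} \leq \sum_{t=1}^T \langle \tilde{\nabla}_t,\, \tilde{\x}_t - \x^\star\rangle = \underbrace{\sum_{t=1}^T \langle \tilde{\nabla}_t,\, \bar{\x}_t - \x^\star\rangle}_{(\mathrm{I})} + \underbrace{\sum_{t=1}^T \langle \tilde{\nabla}_t,\, \tilde{\x}_t - \bar{\x}_t\rangle}_{(\mathrm{II})}.
\]
For term $(\mathrm{I})$, along \emph{every} realization the sequence $\{\bar{\x}_t\}$ is the deterministic FPL expectation against the realized gradients $\{\tilde{\nabla}_t\}$ which satisfy $\|\tilde{\nabla}_t\|\leq G$, so the primal-dual analysis of Section~\ref{sec:determinstic} applies pathwise and gives $(\mathrm{I}) \leq 2D/\delta + \delta D G^2 dT/2$. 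Taking expectation preserves this bound. This is precisely the step where the primal-dual framework pays off: the bound holds for any gradient trajectory, so no error propagation through $\tilde{\nabla}_{1:t-1}$ occurs.

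For term $(\mathrm{II})$ I would use two different arguments. In the general convex case, Cauchy--Schwarz and $\|\tilde{\nabla}_t\|\leq G$ give each summand at most $G\|\tilde{\x}_t-\bar{\x}_t\|$; since $\tilde{\x}_t$ is the mean of $m$ i.i.d. vectors in $\mathcal{K}$ with conditional mean $\bar{\x}_t$ and $\|\x_t^j-\bar{\x}_t\|\leq 2D$, Jensen and the variance identity yield $\E[\|\tilde{\x}_t-\bar{\x}_t\|\mid\mathcal{H}_t]\leq \sqrt{4D^2/m} = 2D/\sqrt{m}$, so $\E[(\mathrm{II})]\leq 2GDT/\sqrt{m}$. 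In the $\beta$-smooth case, I would split $\tilde{\nabla}_t = \nabla f_t(\bar{\x}_t) + (\tilde{\nabla}_t - \nabla f_t(\bar{\x}_t))$: the first piece is $\mathcal{H}_t$-measurable and its inner product with $\tilde{\x}_t-\bar{\x}_t$ has conditional expectation exactly zero, while smoothness bounds the second piece by $\beta\|\tilde{\x}_t-\bar{\x}_t\|^2$, whose conditional expectation is at most $4\beta D^2/m$. Summing gives $\E[(\mathrm{II})]\leq 4\beta D^2 T/m$.

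The main obstacle is the smooth case bound for $(\mathrm{II})$: the noisy gradient $\tilde{\nabla}_t$ and the noisy play $\tilde{\x}_t$ share the same randomness (the samples $\v_t^j$), so one cannot directly factor the expectation. The resolution is to subtract off the $\mathcal{H}_t$-measurable gradient $\nabla f_t(\bar{\x}_t)$ to expose a conditionally mean-zero term, and then use gradient Lipschitzness to absorb the remainder into a second-moment quantity that shrinks like $1/m$ rather than $1/\sqrt{m}$. This is the ``switching gradients at nearby points'' trick flagged in the techniques paragraph, and it is what converts the $2GDT/\sqrt{m}$ rate into $4\beta D^2 T/m$.
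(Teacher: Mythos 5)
Your proof is correct and follows essentially the same route as the paper: your $\bar{\x}_t$ is the paper's $\hat{\x}_t$, your decomposition into (I) $+$ (II) reproduces Lemma \ref{lemma:prelimregret}, and your treatment of (II) — Cauchy--Schwarz plus the $4D^2/m$ variance bound in the convex case, and the conditionally mean-zero gradient swap plus $\beta$-Lipschitzness in the smooth case — is exactly Lemmas \ref{lemma:variance} and \ref{lemma:smooth}. The only (immaterial) difference is that you bound term (I) by first linearizing $f_t$ via convexity and then invoking the deterministic FPL guarantee pathwise on the linearized losses, whereas the paper keeps $f_t$ inside the primal-dual telescoping via the Fenchel--Young identity $\langle \tilde{\nabla}_t, \tilde{\x}_t\rangle - f_t^*(\tilde{\nabla}_t) = f_t(\tilde{\x}_t)$; the two are equivalent here.
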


\begin{remark}\label{rem:regretbounds}
It follows from Theorem \ref{thm:main} that Algorithm \ref{alg:main} attains an expected regret of $\E[\mathcal{R}_T(\mathcal{A}_1)] = O(\sqrt{T})$ with the parameter choices of $\delta = O(1/\sqrt{T})$ and $m = O(T), O(\beta \sqrt{T})$ for general convex and smooth convex functions respectively. In particular, this restores the original result of the FPL method attaining $O(\sqrt{T})$ regret with $m=1$ sample per iteration for linear, $\beta=0$, loss functions shown in \cite{kalai2005efficient}.
\end{remark}

\begin{algorithm}
\caption{Online Smooth Projection Free (OSPF) Algorithm, $\mathcal{A}_{\text{OSPF}}$}
\label{alg:ospf}
\begin{algorithmic}
\STATE \textbf{Input:} constraint set $\mathcal{K}$, number of rounds $T$, perturbation parameter $\delta$, block size $k$, linear optimization oracle $\mathcal{O}_{\mathcal{K}}(\cdot)$
\STATE pick arbitrary $\x_0 \in \mathcal{K}$, denote $\nabla_0 = \bm{0}$
\FOR {$t=1$ \TO $T$}
\IF{$t \mod k \neq 0$}
    \STATE play $\x_t = \x_{t-1}$
    \STATE observe $f_t$, denote $\nabla_t = \nabla f_t(\x_t)$
\ELSE
    \STATE sample $\v_{t-k+j} \sim \mathbb{B}$ uniformly for $j = 1, \dots, k$
    \STATE denote $\x_t^j = \mathcal{O}_{\mathcal{K}}(- \nabla_{0:t-1} + \frac{1}{\delta} \cdot \v_{t-k+j})$ for $j=1, \dots, k$
    \STATE play $\x_t = \frac{1}{k} \sum_{j=1}^k \x_t^j$
    \STATE observe $f_t$, denote $\nabla_t = \nabla f_t(\x_t)$
\ENDIF
\ENDFOR
\end{algorithmic}
\end{algorithm}

\begin{corollary}\label{cor:reduction}
The expected regret bound of Algorithm \ref{alg:main} for general convex loss functions implies $O(G D \sqrt{d} T^{3/4})$ expected regret when using one linear optimization step per iteration. The analogous reduction induces the OSPF algorithm given in Algorithm \ref{alg:ospf} that attains $O(D (G \sqrt{d} + \beta D) T^{2/3})$ expected regret for smooth functions with one linear optimization step per iteration.
\end{corollary}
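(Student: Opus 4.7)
The plan is to apply a blocking reduction to Algorithm \ref{alg:main}: partition the $T$ rounds into $T/k$ blocks of length $k$, hold a single action across each block, and at each block boundary perform one update of Algorithm \ref{alg:main} with $m=k$ samples. Since each boundary update uses $k$ linear optimizations while the block spans $k$ rounds, the amortized cost is exactly one linear optimization per iteration. For $\beta$-smooth losses this reduction is precisely Algorithm \ref{alg:ospf}; for general convex losses the reduction produces the same structure with a different choice of $k$.

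To analyze this, I would define the block loss $F_b(\x) = \sum_{t \in \text{block } b} f_t(\x)$ and argue that the blocked algorithm is literally Algorithm \ref{alg:main} run on the sequence $(F_1,\dots,F_{T/k})$ with $m = k$. This identification works because (i) the cumulative gradient $\nabla_{0:bk-1}$ stored at a block boundary equals $\sum_{b'<b} \nabla F_{b'}(\x_{b'})$ (the within-block action is constant, so individual gradients simply sum), and (ii) the new action played at the start of the next block is the $k$-sample average of FPL points, matching line 5 of Algorithm \ref{alg:main}. The key structural observation is that if each $f_t$ has gradient bound $G$ and is $\beta$-smooth, then each $F_b$ has gradient bound $G' = kG$ and is $\beta' = k\beta$-smooth.

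Applying Theorem \ref{thm:main} with parameters $(G',\beta',T' = T/k, m=k)$, the smooth regret bound \eqref{eq:smoothregret} becomes
\begin{equation*}
\E\lrbra{\sum_{t=1}^T f_t(\x_t)} - \min_{\x \in \K}\sum_{t=1}^T f_t(\x) \leq \frac{2D}{\delta} + \frac{\delta D k G^2 d T}{2} + \frac{4\beta D^2 T}{k}.
\end{equation*}
Choosing $\delta = 2/(G\sqrt{kdT})$ collapses the first two terms to $2DG\sqrt{kdT}$, and then $k = T^{1/3}$ balances the remaining two terms at $O(D(G\sqrt{d} + \beta D)T^{2/3})$, as claimed. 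For general convex losses the analogous bound \eqref{eq:genregret} substitutes $2GDT/\sqrt{k}$ in place of $4\beta D^2 T/k$; balancing with $k = \sqrt{T}$ then yields $O(GD\sqrt{d}T^{3/4})$.

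I do not expect a serious obstacle; the only minor technicality is the boundary accounting in Algorithm \ref{alg:ospf}, since the initial action $\x_0$ is held for only $k-1$ rounds and the $t \bmod k$ indexing shifts block alignment by one. This contributes at most an extra $O(kGD)$ regret, which is subsumed by the $T^{2/3}$ (resp.\ $T^{3/4}$) main term.
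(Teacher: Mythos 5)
Your proposal is correct and follows essentially the same route as the paper: the same blocking reduction to Algorithm \ref{alg:main} with $m=k$ samples per block, the same rescaled constants $G'=kG$, $\beta'=k\beta$, the same choice of $\delta$, and the same balancing $k=\sqrt{T}$ (general) and $k=T^{1/3}$ (smooth). Your remark on the boundary accounting for $\x_0$ and the $t \bmod k$ alignment is a detail the paper glosses over, and your handling of it is fine.
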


\section{The Case of Unlimited Computation} \label{sec:determinstic}
In an ideal scenario the player would be given unrestrained computational power along with access to the linear optimization oracle. Then the expected FPL method, as a projection-free online algorithm, is known to obtain $O(\sqrt{T})$ regret bound optimal with respect to $T$ for general convex loss functions. The exact algorithm is spelled out in Algorithm \ref{alg:eftpl}. The original analysis follows the standard recipe of online learning literature coined by \cite{kalai2005efficient}: no regret for Be-The-Leader -- the algorithm suffers no regret if it is hypothetically one step ahead of the adversary, i.e. uses $\x_{t+1}$ for the loss function $f_t$; stability -- the predictions of consecutive rounds $\x_t, \x_{t+1}$ are not too far apart from each other \citep{hazan2016introduction}. We provide an alternative approach developed by \cite{shalev2007primal} that is based on duality and enables the further analysis of Algorithm \ref{alg:main}.
\begin{algorithm}
\caption{Expected Follow-the-Perturbed-Leader Algorithm, $\mathcal{A}_3$ \label{alg:eftpl}}
\begin{algorithmic}
\STATE \textbf{Input:} constraint set $\mathcal{K}$, number of rounds $T$, perturbation parameter $\delta$, linear optimization oracle $\mathcal{O}_{\mathcal{K}}(\cdot)$
\FOR {$t=1$ \TO $T$}
\STATE compute $\x_t = \E_{\v \sim \mathbb{B}}[\mathcal{O}_{\mathcal{K}}(- \nabla_{1:t-1} + \frac{1}{\delta} \cdot \v)]$
\STATE play $\x_t$, observe $f_t$, denote $\nabla_t = \nabla f_t(\x_t)$
\ENDFOR
\end{algorithmic}
\end{algorithm}

\begin{theorem}\label{thm:eftpl}
Given that Assumptions \ref{assume:set} and \ref{assume:function} hold, Algorithm \ref{alg:eftpl} suffers $\mathcal{R}_T(\mathcal{A}_3) = O(\sqrt{T})$ regret.
\end{theorem}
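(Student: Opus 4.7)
The plan is to recast Algorithm \ref{alg:eftpl} as a Follow-the-Regularized-Leader (FTRL) method, where the implicit regularizer comes from the Fenchel dual of a smoothed version of $\mathcal{M}_\mathcal{K}$, and then apply the standard smoothness-based telescoping argument that is the backbone of the Shalev-Shwartz primal-dual analysis.

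First I would define the smoothed potential
\begin{equation*}
\Phi(\y) \;=\; \E_{\v \sim \mathbb{B}}\!\left[ \mathcal{M}_\mathcal{K}\!\left(\y + \tfrac{1}{\delta}\v\right)\right],
\end{equation*}
and observe that, since $\nabla \mathcal{M}_\mathcal{K}(\z) = \mathcal{O}_\mathcal{K}(\z)$ almost everywhere, the iterate produced by Algorithm \ref{alg:eftpl} is exactly $\x_t = \nabla \Phi(-\nabla_{1:t-1})$. Equivalently, $\x_t$ minimizes $\Phi^*(\x) + \langle \nabla_{1:t-1},\x\rangle$ over $\mathcal{K}$, which is FTRL with regularizer $R := \Phi^*$. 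Convexity of $\Phi$ follows immediately from convexity of $\mathcal{M}_\mathcal{K}$ (Lemma \ref{lemma:lipschitz}) and linearity of expectation.

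The key lemma to establish is that $\Phi$ is $\beta_\Phi$-smooth with $\beta_\Phi = O(D d \delta)$. This is the standard smoothing-by-convolution fact: because $\mathcal{M}_\mathcal{K}$ is $D$-Lipschitz (Lemma \ref{lemma:lipschitz}), averaging against the uniform distribution on the ball of radius $1/\delta$ yields a function whose gradient, via a divergence-theorem calculation converting the volume integral of $\nabla \mathcal{M}_\mathcal{K}$ into a surface integral of $\mathcal{M}_\mathcal{K}$ over $\partial \mathbb{B}$, is Lipschitz with constant proportional to $Dd\delta$. I expect this smoothness estimate to be the main technical step; everything else is bookkeeping.

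With smoothness in hand, applying the descent-lemma inequality at $\y = -\nabla_{1:t-1}$ with step $-\nabla_t$ gives
\begin{equation*}
\Phi(-\nabla_{1:t}) \;\le\; \Phi(-\nabla_{1:t-1}) - \langle \nabla_t, \x_t\rangle + \tfrac{\beta_\Phi}{2}\|\nabla_t\|^2.
\end{equation*}
Telescoping over $t=1,\dots,T$ yields
\begin{equation*}
\sum_{t=1}^T \langle \nabla_t, \x_t\rangle \;\le\; \Phi(0) - \Phi(-\nabla_{1:T}) + \tfrac{\beta_\Phi G^2 T}{2}.
\end{equation*}

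To close the argument I would bound the boundary terms using the two defining properties of the perturbation. By Lemma \ref{lemma:lipschitz} and $\|\v\|\le 1$, $\Phi(0)\le D/\delta$. By Jensen's inequality applied to the convex $\mathcal{M}_\mathcal{K}$ and $\E[\v]=0$, $\Phi(-\nabla_{1:T}) \ge \mathcal{M}_\mathcal{K}(-\nabla_{1:T}) = -\min_{\x\in\mathcal{K}}\langle \nabla_{1:T},\x\rangle$. Combining, and linearizing the regret by convexity of each $f_t$, we get
\begin{equation*}
\mathcal{R}_T(\mathcal{A}_3) \;\le\; \sum_{t=1}^T\langle \nabla_t, \x_t - \x^\st\rangle \;\le\; \tfrac{D}{\delta} + \tfrac{D d \delta G^2 T}{2},
\end{equation*}
and tuning $\delta = \Theta(1/\sqrt{dT})$ gives $\mathcal{R}_T(\mathcal{A}_3) = O(DG\sqrt{dT}) = O(\sqrt T)$ with the convention that $d,D,G$ are constants. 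This also matches the $\delta$-dependence in the bounds of Theorem \ref{thm:main}, which is reassuring for consistency with the later sampled analysis.
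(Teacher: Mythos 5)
Your proof is correct, and its technical core coincides with the paper's: the same smoothed potential $\Phi(\y)=\E_{\v\sim\mathbb{B}}[\mathcal{M}_{\mathcal{K}}(\y+\frac{1}{\delta}\v)]$ (the paper's $h_{\delta}^*$), the same $\delta d D$-smoothness estimate via the divergence theorem (Lemma \ref{lemma:randsmoothing}), and the same telescoped descent-lemma step, which is precisely the paper's per-round lower bound \eqref{eq:mainbound} on $\Delta_t$. Where you genuinely diverge is in the surrounding bookkeeping. The paper wraps the argument in explicit Lagrangian/Fenchel duality: it introduces the implicit regularizer $h_{\delta}$, carries the conjugates $f_t^*$ through every step via the identity $\langle \nabla_t,\x_t\rangle - f_t^*(\nabla_t)=f_t(\x_t)$, and closes with weak duality together with the bound $\max_{\x\in\mathcal{K}} h_{\delta}(\x)\le D/\delta$. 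You instead stay entirely with the linearized losses $\langle\nabla_t,\cdot\rangle$, close the telescope with Jensen's inequality ($\Phi(\y)\ge\mathcal{M}_{\mathcal{K}}(\y)$, using $\E[\v]=\bm{0}$) and the trivial bound $\min_{\x\in\mathcal{K}}\langle\nabla_{1:T},\x\rangle\le\langle\nabla_{1:T},\x^*\rangle$, and only invoke convexity of $f_t$ at the very end. Your route is more elementary and never needs $h_{\delta}$ or $f_t^*$ (it even saves a factor: $D/\delta$ in place of the paper's $2D/\delta$, since Jensen absorbs the $\max_{\x}h_{\delta}(\x)$ term); the paper's dual formulation buys a per-iteration increment $\Delta_t$ stated directly in terms of $f_t(\x_t)$, which is the form reused in Lemma \ref{lemma:prelimregret} to isolate the estimation error of the sampled algorithm --- though your linearized version would serve that later purpose equally well, since the extra term $\langle\tilde{\nabla}_t,\hat{\x}_t-\tilde{\x}_t\rangle$ arises identically when $\nabla\Phi(-\tilde{\nabla}_{1:t-1})=\hat{\x}_t$ is compared against the played point $\tilde{\x}_t$.
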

\begin{proof}
The proof is based on duality when one considers the following optimization problem
\begin{equation}\label{eq:primal}
    \min_{\x \in \mathcal{K}} \{ h_{\delta}(\x) + \sum_{t=1}^T f_t(\x) \}
\end{equation} 
which resembles the loss suffered by the best-in-hindsight fixed action. The dual objective, that is to be maximized, can be obtained using Lagrange multipliers and is given by (see \cite{shalev2007primal} for details)
\begin{equation}\label{eq:dualobj}
    \mathcal{D}(\bm{\lambda}_1, \dots, \bm{\lambda}_T) = - h_{\delta}^*(- \bm{\lambda}_{1:T}) - \sum_{t=1}^T f_t^*(\bm{\lambda}_t)
\end{equation}
The term $h_{\delta}(\cdot)$ serves as regularization and is defined implicitly through its Fenchel conjugate $h_{\delta}^*(\y) = \E_{\v \sim \mathbb{B}}[\mathcal{M}_{\mathcal{K}}(\y + \frac{1}{\delta} \cdot \v)]$, a stochastic smoothing of the value oracle, that is $\delta dD$-smooth according to the following lemma and the fact that $\mathcal{M}_{\mathcal{K}}(\cdot)$ is $D$-Lipschitz. 
\begin{lemma}\label{lemma:randsmoothing}
The function $\hat{g}(\y) = \E_{\v \sim \mathbb{B}}[g(\y + \frac{1}{\delta} \cdot \v)]$ is $\delta d L$-smooth given $g : \R^d \to \R$ is an $L$-Lipschitz function.
\end{lemma}
The duality gap suggests that the objective \eqref{eq:dualobj} is upper bounded by \eqref{eq:primal} for any values of $\bm{\lambda}_t, t=1, \dots, T$ hence the goal is to upper bound the online cumulative loss by \eqref{eq:dualobj}. To achieve this, take $\bm{\lambda}_t = \nabla_t = \nabla f_t(\x_t)$ for all $t \in [T]$ where the action $\x_t$ is chosen according to Algorithm \ref{alg:eftpl}. Denote the incremental difference as $\Delta_t = \mathcal{D}(\nabla_1, \dots, \nabla_t, \bm{0}, \dots, \bm{0}) - \mathcal{D}(\nabla_1, \dots, \nabla_{t-1}, \bm{0}, \dots, \bm{0})$ and notice that the dual can be written as $\mathcal{D}(\nabla_1, \dots, \nabla_T) = \sum_{t=1}^T \Delta_t + \mathcal{D}(\bm{0}, \dots, \bm{0})$. For each $t \in [T]$,
\begin{align}
    \Delta_t &= -\left[ h_{\delta}^*(-\nabla_{1:t}) - h_{\delta}^*(- \nabla_{1:t-1}) \right] - f_t^*(\nabla_t) + f_t^*(\bm{0}) \geq \tag*{smoothness of $h_{\delta}^*(\cdot)$} \nonumber \\
    &\geq \langle \nabla_t, \nabla h_{\delta}^*(- \nabla_{1:t-1}) \rangle - \frac{\delta dD}{2} \| \nabla_t \|^2 - f_t^*(\nabla_t) + f_t^*(\bm{0})= \tag*{definition of $\x_t$} \nonumber \\
    &= \langle \nabla_t, \x_t \rangle - f_t^*(\nabla_t) - \frac{\delta dD}{2} \| \nabla_t \|^2 + f_t^*(\bm{0}) = f_t(\x_t) - \frac{\delta dD}{2} \| \nabla_t \|^2 + f_t^*(\bm{0})\label{eq:mainbound}
\end{align}
where we use the fact that the action $\x_t$ from Algorithm \ref{alg:eftpl} can alternatively be expressed as $\x_t = \nabla h_{\delta}^*(-\nabla_{1:t-1})$ and the Fenchel dual identity $\langle \nabla_t, \x_t \rangle - f_t^*(\nabla_t) = f_t(\x_t)$ for convex $f_t(\cdot)$. The obtained inequality \eqref{eq:mainbound} quantifies how much regret an action contributes at a given iteration $t \in [T]$ detached from the rest of the rounds of the game. Such a property of the analysis ends up being crucial in showing the regret bounds further in this work. Note that by definition $\mathcal{D}(\bm{0}, \dots, \bm{0}) = - h_{\delta}^*(\bm{0}) - \sum_{t=1}^T f_t^*(\bm{0})$ which gives the identity $\sum_{t=1}^T \Delta_t - \sum_{t=1}^T f_t^*(\bm{0}) = \mathcal{D}(\nabla_1, \dots, \nabla_T) + h_{\delta}^*(\bm{0})$. Thus,  sum up \eqref{eq:mainbound} for all $t = 1, \dots, T$ to bound the online cumulative loss is by
\begin{equation}\label{eq:onlineloss}
    \sum_{t=1}^T f_t(\x_t) \leq \mathcal{D}(\nabla_1, \dots, \nabla_T) + h_{\delta}^*(\bm{0}) + \frac{\delta dD}{2} \sum_{t=1}^T \| \nabla_t \|^2
\end{equation}
The bound given by \eqref{eq:onlineloss} and the duality gap of the primal \eqref{eq:primal} provide the necessary ingredients to conclude the $O(\sqrt{T})$ regret bound. All that is left are technical details to reach the bound using the assumptions of the given setup. First, by definition $\mathcal{M}_{\mathcal{K}}(\bm{0}) = 0$ which implies, by Lipschitzness of $\mathcal{M}_{\mathcal{K}}(\cdot)$, that $| \mathcal{M}_{\mathcal{K}}(\frac{1}{\delta} \cdot \v) | \leq D \| \v \| / \delta \leq D / \delta$ for any $\v \in \mathbb{B}$ so $h_{\delta}^*(\bm{0}) \leq D / \delta$. Second, the primal expression in \eqref{eq:primal} can be related to the best loss in hindsight the following way
\begin{equation}
    \min_{\x \in \mathcal{K}} \{ h_{\delta}(\x) + \sum_{t=1}^T f_t(\x) \} \leq h_{\delta}(\x^*) + \sum_{t=1}^T f_t(\x^*) \leq \min_{\x \in \mathcal{K}} \sum_{t=1}^T f_t(\x) + \max_{\x \in \mathcal{K}} h_{\delta}(\x)
\end{equation}
where $\x^*$ is the optimal action in hindsight, i.e. the minimizer of $\sum_{t=1}^T f_t(\cdot)$ over $\mathcal{K}$. Moreover, notice that for any $\x \in \mathcal{K}, \y \in \R^d$ the expression $\langle \x, \y \rangle - h_{\delta}^*(\y) = \E_{\v \sim \mathbb{B}}[\langle \x, \y \rangle - \max_{\x' \in \mathcal{K}} \langle \x', \y + \frac{1}{\delta} \cdot \v \rangle]$ can be bounded as follows: for each $\v \in \mathbb{B}$ the expression inside the expectation is bounded $\langle \x, \y \rangle - \max_{\x' \in \mathcal{K}} \langle \x', \y + \frac{1}{\delta} \cdot \v \rangle \leq \langle \x, \y \rangle - \langle \x, \y + \frac{1}{\delta} \cdot \v \rangle = \langle \x, -\frac{1}{\delta} \cdot \v \rangle \leq \| \x \| \| \v \| / \delta \leq D / \delta$, hence for any $\x \in \mathcal{K}$ the bound $h_{\delta}(\x) \leq D / \delta$ holds. Finally, according to our assumptions the loss gradients are bounded in norm, i.e. $\| \nabla_t \| \leq G$. Combining the aforementioned properties with \eqref{eq:onlineloss} along with the fact that $\mathcal{D}(\nabla_1, \dots, \nabla_T)$ is upper bounded by \eqref{eq:primal} due to the duality gap, we conclude the desired inequality
\begin{equation*}
    \sum_{t=1}^T f_t(\x_t) - \min_{\x \in \mathcal{K}} \sum_{t=1}^T f_t(\x) \leq \max_{\x \in \mathcal{K}} h_{\delta}(\x) + h_{\delta}^*(\bm{0}) + \frac{\delta dD}{2} \sum_{t=1}^T \| \nabla_t \|^2 \leq 2D / \delta + \frac{\delta dD}{2} G^2 T
\end{equation*}
yielding the regret bound $\mathcal{R}_T(\mathcal{A}_3) \leq 2 G D \sqrt{d T} = O(\sqrt{T})$ with the optimal choice of the regularization parameter $\delta = 2 / G \sqrt{d T}$.
\end{proof}
\begin{remark}\label{rem:assumptions}
It is essential to note how each property given in Assumptions \ref{assume:set} and \ref{assume:function} was used in the proof above. The convexity of the constraint set $\mathcal{K}$ allows the action $\x_t$, as an expectation of points in the set $\mathcal{K}$, to be a permissible action as well. Given compactness of $\mathcal{K}$, we interchange the use of supremum and maximum of bounded expressions at various points throughout. The norm bound $D$ of the set $\mathcal{K}$ is used in showing that $\mathcal{M}_{\mathcal{K}}(\cdot)$ is $D$-Lipschitz and bounding several regularization terms. In terms of the loss functions, the convexity of $f_t(\cdot)$, as well as $\mathcal{M}_{
\mathcal{K}}(\cdot)$, allows us to use the Fenchel-Moreau theorem (continuity is implied by differentiability) while the gradient norm bound is simply used in the last stage of obtaining the regret bound. 
\end{remark}

\section{Oracle Efficiency via Estimation} \label{sec:samples}
The results in section \ref{sec:determinstic} suggest that Algorithm \ref{alg:eftpl}, known as expected FPL, possesses the features desired in this work -- it is both online and projection-free -- and obtains an optimal regret bound of $O(\sqrt{T})$. However, it is computationally intractable due to the expectation term given in the definition of the action $\x_t$. In this section, we remedy this issue and explore the scenario where the actions played during the online game are random estimators of the mean. In particular, we propose to simply take the empirical average of $m$ i.i.d. samples instead of the expectation itself as described in Algorithm \ref{alg:main}.

It is essential to note that Algorithm \ref{alg:main} has a computational efficiency of $m \cdot T$ calls to the linear optimization oracle $\mathcal{O}_{\mathcal{K}}(\cdot)$ as the rest of the computation is negligible in comparison. The main theorem of this work, Theorem \ref{thm:main}, indicates the performance of the algorithm in terms of expected regret for (i) general convex loss functions and (ii) smooth convex loss functions, respectively. Given the duality approach to analyzing online algorithms demonstrated in the previous section, the sampled FPL algorithm can now be analyzed to prove the bounds stated in Theorem \ref{thm:main}. In particular, the following lemma demonstrates that each estimation from Algorithm \ref{alg:main} contributes to the regret in a disjoint fashion, i.e. there is no error propagation through time.

\begin{lemma}\label{lemma:prelimregret}
Suppose the Assumptions \ref{assume:set} and \ref{assume:function} hold and denote $\hat{\x}_t = \E_{\v \sim \mathbb{B}} \left[ \mathcal{O}_{\mathcal{K}}(-\tilde{\nabla}_{1:t-1} + \frac{1}{\delta} \cdot \v) \right]$ for all $t \in [T]$ with $\tilde{\x}_t$ and $\tilde{\nabla}_t$ as defined in Algorithm \ref{alg:main}. Then, the regret of the algorithm is bounded as follows
\begin{equation}\label{eq:newregretbnd}
    \sum_{t=1}^T f_t(\tilde{\x}_t) \leq \min_{\x \in \mathcal{K}}\{ \sum_{t=1}^T f_t(\x) \} + \mathcal{R}_T(\mathcal{A}_3) + \sum_{t=1}^T \langle \tilde{\nabla}_t , \hat{\x}_t-\tilde{\x}_t \rangle
\end{equation}
\end{lemma}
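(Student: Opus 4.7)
The plan is to reuse, almost verbatim, the primal--dual analysis of Theorem~\ref{thm:eftpl}, substituting the sampled mean $\tilde{\x}_t$ and the observed gradient $\tilde{\nabla}_t = \nabla f_t(\tilde{\x}_t)$ for the ideal action and its gradient that $\mathcal{A}_3$ would have used. The enabling observation is that $\hat{\x}_t$, as defined in the statement, is exactly $\nabla h_{\delta}^*(-\tilde{\nabla}_{1:t-1})$: indeed $h_\delta^*(\y)=\E_{\v}[\mathcal{M}_\K(\y+\tfrac{1}{\delta}\v)]$ and $\nabla \mathcal{M}_\K=\mathcal{O}_\K$, so the smoothness chain of Section~\ref{sec:determinstic} carries over unchanged if we feed the dual the noisy sequence $(\tilde{\nabla}_1,\ldots,\tilde{\nabla}_T)$ instead of the gradient trajectory of $\mathcal{A}_3$.

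Concretely, I would set up the same primal $\min_{\x \in \K}\{h_\delta(\x) + \sum_t f_t(\x)\}$ and its Lagrangian dual $\mathcal{D}(\bm{\lambda}_1,\ldots,\bm{\lambda}_T) = -h_\delta^*(-\bm{\lambda}_{1:T}) - \sum_t f_t^*(\bm{\lambda}_t)$, evaluate the dual at the feasible choice $\bm{\lambda}_t = \tilde{\nabla}_t$, and form the incremental differences $\Delta_t$ exactly as in Theorem~\ref{thm:eftpl}. The $\delta dD$-smoothness of $h_\delta^*$ (Lemma~\ref{lemma:randsmoothing} combined with Lemma~\ref{lemma:lipschitz}), together with the identity $\hat{\x}_t = \nabla h_\delta^*(-\tilde{\nabla}_{1:t-1})$, then yields in exact analogy with \eqref{eq:mainbound}
\[
\Delta_t \;\ge\; \langle \tilde{\nabla}_t,\; \hat{\x}_t \rangle \;-\; \tfrac{\delta d D}{2}\|\tilde{\nabla}_t\|^2 \;-\; f_t^*(\tilde{\nabla}_t) \;+\; f_t^*(\bm{0}).
\]

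The one place the argument departs from the deterministic proof is the Fenchel step: previously $\langle \nabla_t,\x_t\rangle - f_t^*(\nabla_t)$ collapsed to $f_t(\x_t)$ because the gradient and the plugged-in point coincided; now the gradient lives at $\tilde{\x}_t$ while the inner product involves $\hat{\x}_t$. I would just add and subtract, writing $\langle \tilde{\nabla}_t, \hat{\x}_t\rangle = \langle \tilde{\nabla}_t, \tilde{\x}_t\rangle + \langle \tilde{\nabla}_t, \hat{\x}_t - \tilde{\x}_t\rangle$, so that the Fenchel identity (which applies because $\tilde{\nabla}_t=\nabla f_t(\tilde{\x}_t)$) produces $f_t(\tilde{\x}_t)$ from the first piece, and the second piece is precisely the sampling residual that appears in the lemma.

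Summing over $t$, the telescoping of $\sum_t \Delta_t$ cancels the $f_t^*(\bm{0})$ contributions against $\mathcal{D}(\bm{0},\ldots,\bm{0})$, and then the duality-gap upper bound on $\mathcal{D}(\tilde{\nabla}_1,\ldots,\tilde{\nabla}_T)$ together with the same estimates on $\max_\x h_\delta(\x)$, $h_\delta^*(\bm{0})$, and $\|\tilde{\nabla}_t\|\le G$ used at the end of Section~\ref{sec:determinstic} reproduce exactly the $\mathcal{R}_T(\mathcal{A}_3)$ bound, leaving the residual $\sum_t\langle \tilde{\nabla}_t, \hat{\x}_t - \tilde{\x}_t\rangle$ on the right-hand side. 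There is no real obstacle here: the whole point of the primal--dual viewpoint, highlighted by the authors right after \eqref{eq:mainbound}, is that the $\Delta_t$ inequality is purely local---it uses only the current gradient and the current $\hat{\x}_t$, never the history of $\tilde{\x}_s$ for $s<t$---so sampling noise enters only as an additive per-round term and cannot cascade into later iterations. The only care needed while writing up is to avoid ever invoking optimality or any distributional property of $\tilde{\x}_t$ beyond the defining equality $\tilde{\nabla}_t = \nabla f_t(\tilde{\x}_t)$, reserving the handling of the residual to the smoothness/concentration arguments of Sections~\ref{sec:samples}--\ref{sec:highprob}.
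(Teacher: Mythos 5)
Your proposal is correct and follows essentially the same route as the paper's own proof: the same primal--dual setup with $\bm{\lambda}_t = \tilde{\nabla}_t$, the same smoothness-based lower bound on $\Delta_t$ via $\hat{\x}_t = \nabla h_\delta^*(-\tilde{\nabla}_{1:t-1})$, and the same add-and-subtract step that turns the Fenchel identity at $\tilde{\x}_t$ into $f_t(\tilde{\x}_t)$ plus the residual $\langle \tilde{\nabla}_t, \hat{\x}_t - \tilde{\x}_t\rangle$. The concluding summation, telescoping, and duality-gap estimates also match the paper's argument, so no gaps remain.
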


\begin{proof}
Follow the same proof structure as in the proof of Theorem \ref{thm:eftpl} by considering \eqref{eq:primal}, \eqref{eq:dualobj} as the primal and dual objectives. Consider $\bm{\lambda}_t = \tilde{\nabla}_t = \nabla f_t(\tilde{\x}_t)$ and denote the incremental difference as $\Delta_t = \mathcal{D}(\tilde{\nabla}_1, \dots, \tilde{\nabla}_t, \bm{0}, \dots, \bm{0}) - \mathcal{D}(\tilde{\nabla}_1, \dots, \tilde{\nabla}_{t-1}, \bm{0}, \dots, \bm{0})$. The main component of the proof is showing that $\Delta_t$ for each $t \in [T]$ can be roughly seen as an upper bound on the loss $f_t(\tilde{\x}_t)$ suffered at iteration $t$. 

First note that the played actions $\tilde{\x}_t$ are not, in fact, unbiased estimators of the original $\x_t$; instead denote the expectations by $\hat{\x}_t = \E_{\xi_t}[\tilde{\x}_t] = \E_{\v_t^j}[\x_t^j]$ where $\xi_t = \{\v_t^1, \dots, \v_t^m\}$ comprises the randomness used at iteration $t \in [T]$. For all $t > 1$, the quantity $\hat{\x}_t$ is different from $\x_t$ in that it uses the gradients at the points $\tilde{\x}_1, \dots, \tilde{\x}_{t-1}$ instead of $\x_1, \dots, \x_{t-1}$ and such difference can potentially increase with $t$. In other words, one is defined as $\hat{\x}_t = \E_{\v \sim \mathbb{B}}[\mathcal{O}_{\mathcal{K}}(-\tilde{\nabla}_{1:t-1} + \frac{1}{\delta} \cdot \v)] = \nabla h_{\delta}^*(-\tilde{\nabla}_{1:t-1})$ while the other is equal to $\x_t = \nabla h_{\delta}^*(-\nabla_{1:t-1})$. Hence, the action sequences of $\tilde{\x}_1, \dots, \tilde{\x}_T$ and $\x_1, \dots, \x_T$ can behave quite differently  and one cannot analyze the former based on results about the latter. However, the duality approach enables us to directly analyze the actions of Algorithm \ref{alg:main}. In particular, lower bound the quantity $\Delta_t$ using the smoothness of $h_{\delta}^*(\cdot)$, as done in the proof of Theorem \ref{thm:eftpl}:
\begin{equation}\label{eq:deltabnd}
    \Delta_t \geq \langle \tilde{\nabla}_t, \hat{\x}_t \rangle - \frac{\delta d D}{2} \| \tilde{\nabla}_t \|^2 - f_t^*(\tilde{\nabla}_t) + f_t^*(\bm{0}) = f_t(\tilde{\x}_t) - \frac{\delta d D}{2} \| \tilde{\nabla}_t \|^2 + f_t^*(\bm{0}) + \langle \tilde{\nabla}_t, \hat{\x}_t - \tilde{\x}_t \rangle
\end{equation}
The obtained inequality \eqref{eq:deltabnd} resembles the analogous bound \eqref{eq:mainbound} in the unlimited computation case with the extra term $\langle \tilde{\nabla}_t, \hat{\x}_t - \tilde{\x}_t \rangle$ that can be seen as accounting for the estimation error. This shows that at a given iteration $t \in [T]$ the additional regret is suffered {\it only} at the expense of the current action choice, $\tilde{\x}_t$ instead of $\hat{\x}_t$, while {\it ignoring} the optimality of the previous choices $\tilde{\x}_1, \dots, \tilde{\x}_{t-1}$. We proceed with the proof by summing up $\eqref{eq:deltabnd}$ for all $t = 1, \dots, T$ and using the following facts: by definition $\sum_{t=1}^T \Delta_t = \mathcal{D}(\tilde{\nabla}_1, \dots, \tilde{\nabla}_T) - \mathcal{D}(\bm{0}, \dots, \bm{0})$ and  $\mathcal{D}(\bm{0}, \dots, \bm{0}) = - h_{\delta}^*(\bm{0}) - \sum_{t=1}^T f_t^*(\bm{0})$; as shown before $h_{\delta}^*(\bm{0}) \leq D / \delta$ and $\forall \x \in \mathcal{K}, h_{\delta}^*(\x) \leq D / \delta$; according to Assumption \ref{assume:function}, for all $t \in [T], \forall \x \in \mathcal{K}, \| \nabla f_t(\x) \| \leq G$. Combining all these properties and choosing the same optimal value of the regularization parameter $\delta = 2 / G \sqrt{dT}$ concludes the stated bound \eqref{eq:newregretbnd}. The use of all the assumptions is identical to that of Theorem \ref{thm:eftpl} and detailed in Remark \ref{rem:assumptions}.
\end{proof}

All that remains to reach the conclusions by Theorem \ref{thm:main} is to use Lemma \ref{lemma:prelimregret} and handle the additional regret terms $\langle \tilde{\nabla}_t, \hat{\x}_t-\tilde{\x}_t \rangle$ for each $t \in [T]$. The following claims about smooth functions and empirical averages of random vectors are necessary for the latter part.

\begin{lemma}\label{lemma:smooth}
If $f : \mathcal{K} \to \R$ is a $\beta$-smooth function, then for any $x, y \in \mathcal{K}$
\begin{equation}
    \langle \nabla f(y), x - y \rangle \leq \langle \nabla f(x), x - y \rangle + \beta \| x-y \|^2
\end{equation}
\end{lemma}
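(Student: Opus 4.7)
The plan is to derive the inequality as a direct consequence of the Lipschitz continuity of the gradient, which is the defining property of $\beta$-smoothness. Specifically, I would use the fact that a $\beta$-smooth function $f$ satisfies $\|\nabla f(x) - \nabla f(y)\| \leq \beta \|x - y\|$ for all $x, y \in \mathcal{K}$.

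First I would rewrite the inequality we need to show by moving the right-hand side gradient term to the left, so that the goal becomes bounding $\langle \nabla f(y) - \nabla f(x), x - y \rangle$ from above by $\beta \|x - y\|^2$. This rearrangement makes the relevant quantity a single inner product involving the gradient difference, which is exactly the object that $\beta$-smoothness controls. Then I would apply the Cauchy-Schwarz inequality to bound this inner product by $\|\nabla f(y) - \nabla f(x)\| \cdot \|x - y\|$, followed by the $\beta$-Lipschitz gradient bound to control the first factor by $\beta \|x - y\|$. Multiplying the two norms yields $\beta \|x - y\|^2$, which is exactly the desired bound.

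There is essentially no obstacle here — the proof is a two-line chaining of Cauchy-Schwarz and the Lipschitz gradient property, both of which are standard and require no additional assumption beyond $\beta$-smoothness. The only subtle point worth remarking on is that the inequality holds with $\beta \|x-y\|^2$ rather than the tighter $\tfrac{\beta}{2}\|x-y\|^2$ that appears in the descent lemma; this looseness comes from applying Cauchy-Schwarz directly to the gradient difference rather than integrating along the segment, but it is entirely sufficient for the downstream use of this lemma in bounding the estimation-error term $\langle \tilde{\nabla}_t, \hat{\x}_t - \tilde{\x}_t \rangle$ in the smooth case of Theorem \ref{thm:main}.
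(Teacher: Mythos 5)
Your proposal is correct and matches the paper's proof exactly: both rearrange the claim into a bound on $\langle \nabla f(y) - \nabla f(x), x - y \rangle$, then chain Cauchy--Schwarz with the $\beta$-Lipschitz gradient property. Nothing is missing.
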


\begin{lemma}\label{lemma:variance}
Let $Z_1, \dots, Z_m \sim \mathcal{Z}$ be i.i.d. samples of a bounded random vector $Z \in \R^d$, $\| Z \| \leq D$, with mean $\overline{Z} = \E[Z]$. Denote $\overline{Z}_m = \frac{1}{m} \sum_{j=1}^m Z_j$, then
\begin{equation}
    \E_{\mathcal{Z}} \left[ \| \overline{Z}_m - \overline{Z} \|^2 \right] \leq \frac{4 D^2}{m}
\end{equation}
\end{lemma}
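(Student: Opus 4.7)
The plan is to reduce the claim to a standard variance decomposition for sums of independent zero-mean random vectors, and then bound the per-sample second moment crudely using the norm bound $\|Z\| \le D$.

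First I would rewrite the deviation as an average of centered terms:
\[
\overline{Z}_m - \overline{Z} \;=\; \frac{1}{m}\sum_{j=1}^m (Z_j - \overline{Z}).
\]
Taking the squared Euclidean norm and expanding gives a double sum
\[
\|\overline{Z}_m - \overline{Z}\|^2 \;=\; \frac{1}{m^2}\sum_{j=1}^m \sum_{k=1}^m \langle Z_j - \overline{Z},\, Z_k - \overline{Z}\rangle.
\]
The key step is to take expectations and invoke independence: for $j \neq k$, the random vectors $Z_j - \overline{Z}$ and $Z_k - \overline{Z}$ are independent and mean-zero, so by linearity of expectation and the product rule their inner product contributes $\langle \E[Z_j-\overline{Z}],\E[Z_k-\overline{Z}]\rangle = 0$. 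Only the $j=k$ diagonal survives, leaving
\[
\E\bigl[\|\overline{Z}_m - \overline{Z}\|^2\bigr] \;=\; \frac{1}{m^2}\sum_{j=1}^m \E\bigl[\|Z_j - \overline{Z}\|^2\bigr] \;=\; \frac{1}{m}\,\E\bigl[\|Z_1 - \overline{Z}\|^2\bigr],
\]
where the last equality uses that the $Z_j$ are identically distributed.

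It remains to upper bound the single-sample second moment. Since $\|Z\| \le D$ almost surely, Jensen's inequality applied to the convex function $\|\cdot\|$ yields $\|\overline{Z}\| = \|\E Z\| \le \E\|Z\| \le D$. The triangle inequality then gives $\|Z_1 - \overline{Z}\| \le \|Z_1\| + \|\overline{Z}\| \le 2D$ almost surely, so $\E\|Z_1 - \overline{Z}\|^2 \le 4D^2$. Combining this with the display above concludes $\E\|\overline{Z}_m - \overline{Z}\|^2 \le 4D^2/m$, as claimed.

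There is no real obstacle here; the only subtlety is being careful that the cross terms in the expansion actually vanish in expectation, which requires both independence and the mean-zero property of $Z_j - \overline{Z}$. A slightly tighter constant (namely $D^2/m$ instead of $4D^2/m$) would follow from $\E\|Z_1 - \overline{Z}\|^2 = \E\|Z_1\|^2 - \|\overline{Z}\|^2 \le D^2$, but the loose bound in the lemma is sufficient for its downstream use and is obtained most transparently by the triangle-inequality argument above.
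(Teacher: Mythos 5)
Your proof is correct and is essentially the same argument as the paper's: both reduce to the additivity of variance over independent samples (you via the vanishing cross terms in the inner-product expansion, the paper via a coordinate-wise variance computation) and then bound the single-sample second moment by $4D^2$ using $\|Z-\overline{Z}\|\le 2D$. Your aside about the tighter constant $D^2/m$ is also valid but, as you note, unnecessary here.
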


\begin{proof}[Proof of Theorem \ref{thm:main}] First, note that according to Lemma \ref{lemma:variance}, the following bound $\E_{\xi_t}[ \| \hat{\x}_t-\tilde{\x}_t \|] \leq \sqrt{\E_{\xi_t}[ \| \hat{\x}_t-\tilde{\x}_t \|^2]} \leq \frac{2D}{\sqrt{m}}$ holds for all $t \in [T]$. In the case of general convex loss functions, use the Cauchy-Schwartz inequality along with the norm bound on the gradients and take expectation over the whole randomness in the algorithm $\xi_{1:T}$ in the reverse order $\xi_T, \dots, \xi_1$ to obtain for each $t \in [T]$
\begin{equation}\label{eq:cvxxtrabnd}
    \E_{\xi_{1:T}}[\langle \tilde{\nabla}_t, \hat{\x}_t-\tilde{\x}_t \rangle] \leq G \E_{\xi_{1:t}}[ \| \hat{\x}_t-\tilde{\x}_t \|] = G \E_{\xi_{1:t-1}}\left[ \E_{\xi_t}[ \| \hat{\x}_t-\tilde{\x}_t \| \, | \, \xi_{1:t-1}] \right] \leq \frac{2DG}{\sqrt{m}}
\end{equation}
Ordering the randomness of the iterations in reverse and taking the expectation conditional on $\xi_{1:t-1}$ is necessary in order to use Lemma \ref{lemma:variance} since $\hat{\x}_t$ is a deterministic quantity over $\xi_t$ only when conditioned on the previous randomness $\xi_{1:t-1}$. Finally, taking expectation over $\xi_{1:T}$ on the bound in \eqref{eq:newregretbnd} and using \eqref{eq:cvxxtrabnd} for all $t=1, \dots, T$ concludes the expected regret bound of $\E[\mathcal{R}_T(\mathcal{A}_1)] \leq \mathcal{R}_T(\mathcal{A}_2)+\frac{2DGT}{\sqrt{m}}$ given in detail in \eqref{eq:genregret} for general convex loss functions. It is worth to mention that this result did not require any assumptions on how the loss function $f_t(\cdot)$ at each iteration $t \in [T]$ is chosen by the adversary: in particular, the result holds for the strongest adaptive adversarial setting where the adversary can pick $f_t(\cdot)$ having knowledge of the previous actions by the player, i.e. the randomness $\xi_{1:t-1}$. This is true due to the fact that all the terms containing the function $f_t(\cdot)$ explicitly, e.g. $\tilde{\nabla}_t$, are separated and bound on their own.

The case of smooth convex loss functions requires a more nuanced approach in order to achieve an improvement on the general result. The key is to replace the gradient at the point $\tilde{\x}_t$ with a quantity that does not depend on $\xi_t$ and leverage the fact that $\tilde{\x}_t$ is an unbiased estimator of $\hat{\x}_t$. More formally, given $f_t(\cdot)$ is a $\beta$-smooth function use Lemma \ref{lemma:smooth} to get $\langle \tilde{\nabla}_t, \hat{\x}_t-\tilde{\x}_t \rangle \leq \langle \hat{\nabla}_t, \hat{\x}_t-\tilde{\x}_t \rangle + \beta \| \hat{\x}_t-\tilde{\x}_t \|^2$ where $\hat{\nabla}_t = \nabla f_t(\hat{\x}_t)$ is denoted accordingly. The quantities $f_t(\cdot)$ and $\hat{\x}_t$ are both (potentially) dependant on previous randomness $\xi_{1:t-1}$ but are deterministic with respect to $\xi_t$ when conditioned on $\xi_{1:t-1}$, hence so is $\hat{\nabla}_t$. Thus, it holds that $\E_{\xi_t}[\langle \hat{\nabla}_t, \hat{\x}_t-\tilde{\x}_t \rangle \, | \, \xi_{1:t-1}]] = 0$ for all $t \in [T]$. This fact results in the additional regret having a quadratic dependence on the estimation error instead of linear as before:
\begin{equation}\label{eq:smthxtrabnd}
    \E_{\xi_{1:T}}[\langle \tilde{\nabla}_t, \hat{\x}_t-\tilde{\x}_t \rangle] \leq \E_{\xi_{1:t-1}} \left[ \E_{\xi_{t}}[\langle \hat{\nabla}_t, \hat{\x}_t-\tilde{\x}_t \rangle + \beta \| \hat{\x}_t-\tilde{\x}_t \|^2 \, | \, \xi_{1:t-1}] \right] \leq \frac{4 \beta D^2}{m}
\end{equation}
Use the result obtained in \eqref{eq:smthxtrabnd} for all $t \in [T]$ in order to bound the additional regret term in \eqref{eq:newregretbnd} and conclude the expected regret bound of $\E[\mathcal{R}_T(\mathcal{A}_1)] \leq \mathcal{R}_T(\mathcal{A}_2) + \frac{4 \beta D^2 T}{m}$ given in detail in \eqref{eq:smoothregret} for smooth convex loss functions. Since the adversary is allowed to pick the loss function $f_t(\cdot)$ that depends on the previous randomness $\xi_{1:t-1}$, this regret bound again holds in the strongest adaptive adversarial setting.
\end{proof}

\subsection{Reduction to OSPF}\label{sec:reduction}
The results given in Theorem \ref{thm:main} indicate $O(\sqrt{T})$ optimal regret bounds for both convex and smooth convex loss functions when taking $m = O(T), O(\beta \sqrt{T})$ respectively, as suggested by Remark \ref{rem:regretbounds}. However, $m$ is not simply a parameter of the algorithm: it indicates the number of linear optimizations per iteration so in $T$ iterations the regret $O(\sqrt{T})$ is achieved with an overall linear optimization complexity of $m \cdot T$. To avoid such convoluted claims, we instead provide a reduction of Algorithm \ref{alg:main}, named OSPF in the smooth case, to the setting of one linear optimization per iteration that gives $O(T^{2/3})$ and $O(T^{3/4})$ expected regret for smooth and general convex losses, respectively.
\begin{proof}[Proof of Corollary \ref{cor:reduction}]
The reduction follows a simple blocking technique, i.e. grouping several rounds of the game into one as detailed in Algorithm \ref{alg:ospf}. Consider the online optimization setting with loss functions $f_1, \dots, f_T$ by the adversary after playing the actions $\x_1, \dots, \x_T$ using only one linear optimization per iteration. Let $T = nk$ where $n, k$ are assumed to be integers for simplicity and denote
\begin{equation}
f'_i = \sum_{t=(i-1)\cdot k+1}^{i\cdot k} f_t, \quad \forall i = 1, \dots, n    
\end{equation}
Since each $f'_i, i \in [n]$ contains $k$ losses from the original problem, then the player is allowed $k$ linear optimizations to handle a single loss $f'_i$. Hence, use Algorithm \ref{alg:main} for $T=n$ iterations with $m=k$ samples at each iteration to get actions $\x'_1, \dots, \x'_n$ and play $\x_t = \x'_i$ for all $(i-1) \cdot k + 1 \leq t \leq i \cdot k$ in the original setting -- call this algorithm $\mathcal{A}'_1$. The corresponding constants of the constructed game are $D' = D$ and $G' = G \cdot k$ since the constraint set $\mathcal{K}$ remains unchanged and a loss function constitutes $k$ original losses together. Thus, the expected regret bound of $\mathcal{A}'_1$ for general convex functions, according to Theorem \ref{thm:main}, is given by
\begin{equation}
    \E [\mathcal{R}_T(\mathcal{A}'_1)] \leq 2 D / \delta + \delta D (G \cdot k)^2 \cdot d n / 2 + \frac{2 (G \cdot k) D n}{\sqrt{k}} = 2DG\sqrt{d}\sqrt{n}k+2DGn\sqrt{k}
\end{equation}
with the parameter choice of $\delta = 2/G\sqrt{d}\sqrt{n}k$. Letting $n = k = T^{1/2}$ yields the expected regret bound for the algorithm $\mathcal{A}'_1$ that uses one linear optimization per iteration as $\E[\mathcal{R}_T(\mathcal{A}'_1)] = O(\sqrt{n}k+n\sqrt{k}) = O(T^{3/4})$ for general convex functions. The case of smooth convex functions is handled analogously. Note that the $\mathcal{A}'_1$ algorithm is equivalent to $\mathcal{A}_{\text{OSPF}}$ given in Algorithm \ref{alg:ospf}. The smoothness parameter of a sum of $k$ functions that are $\beta$-smooth equals $\beta' = \beta \cdot k$. Hence, the expected regret bound of $\mathcal{A}_{\text{OSPF}}$ for smooth convex functions is given by
\begin{equation}
    \E [\mathcal{R}_T(\mathcal{A}_{\text{OSPF}})] \leq 2 D / \delta + \delta D (G \cdot k)^2 \cdot d n / 2 + \frac{4 (\beta \cdot k) D^2 n}{k} = 2DG\sqrt{d}\sqrt{n}k + 4\beta D^2 n
\end{equation}
with the same choice of $\delta = 2/G\sqrt{d}\sqrt{n}k$. In this case let $n = T^{2/3}$ and $k = T^{1/3}$ to attain the expected regret bound  $O(T^{2/3})$ for $\mathcal{A}_{\text{OSPF}}$ with one linear optimization per iteration.
\end{proof}

\subsection{High Probability Bounds} \label{sec:highprob}
The theoretical guarantees for the main algorithm of this paper, Algorithm \ref{alg:main}, have all been in terms of expected regret as the performance metric. Even though expected regret is a widely accepted metric for online randomized algorithms, one might wonder whether the expectation bound holds only due to a balance of large and small chunks of regret or the given result actually holds most of the time. To answer this question, we provide bounds on $\mathcal{R}_T(\mathcal{A}_1)$ asymptotically equivalent (up to logarithmic factors) to the statements from Theorem \ref{thm:main} that hold with high probability over the randomness in $\mathcal{A}_1$: these results also transfer analogously to the reduction from section \ref{sec:reduction}. In particular, the following theorem shows that Algorithm \ref{alg:main} obtains regret of $\mathcal{R}_T(\mathcal{A}_1) = O(\sqrt{T})+\tilde{O}(T/\sqrt{m})$ for general convex loss functions and $\mathcal{R}_T(\mathcal{A}_1) = O(\sqrt{T})+\tilde{O}(\beta T/m)$ for smooth convex loss functions both holding with high probability.
\begin{theorem}\label{thm:highprob}
Given that the Assumptions \ref{assume:set} and \ref{assume:function} hold, the regret of Algorithm \ref{alg:main} for general convex loss functions is w.p. $1-\sigma$ for any $\sigma>0$ bounded by
\begin{equation}\label{eq:highprobconvex}
    \sum_{t=1}^T f_t(\tilde{\x}_t) \leq \min_{\x \in \mathcal{K}} \{ \sum_{t=1}^T f_t(\x) \} + \mathcal{R}_T(\mathcal{A}_3) + \frac{2 G D T}{\sqrt{m}} \cdot \sqrt{\log 2T / \sigma}
\end{equation}
If the convex loss functions are also $\beta$-smooth, then it is w.p. $1-\sigma$ for any $\sigma > 0$ bounded by
\begin{equation}\label{eq:highprobsmooth}
    \sum_{t=1}^T f_t(\tilde{\x}_t) \leq \min_{\x \in \mathcal{K}} \{ \sum_{t=1}^T f_t(\x) \} + \mathcal{R}_T(\mathcal{A}_3) + 2 G D \sqrt{2 T \log 4 / \sigma} + \frac{8 \beta D^2 T}{m} \cdot \log 4T / \sigma
\end{equation}
\end{theorem}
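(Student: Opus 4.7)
The plan is to strengthen the expectation argument from the proof of Theorem \ref{thm:main} into a high-probability one, starting from the deterministic bound in Lemma \ref{lemma:prelimregret}:
\begin{equation*}
    \sum_{t=1}^T f_t(\tilde{\x}_t) \leq \min_{\x \in \mathcal{K}} \sum_{t=1}^T f_t(\x) + \mathcal{R}_T(\mathcal{A}_3) + \sum_{t=1}^T \langle \tilde{\nabla}_t, \hat{\x}_t - \tilde{\x}_t \rangle,
\end{equation*}
which holds for \emph{every} realization of the randomness $\xi_{1:T}$. It thus suffices to control the estimation error $\sum_t \langle \tilde{\nabla}_t, \hat{\x}_t - \tilde{\x}_t \rangle$ with high probability in each of the two regimes.

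For general convex losses I would combine Cauchy--Schwarz with a scalar concentration bound on $\|\hat{\x}_t - \tilde{\x}_t\|$. Conditional on $\xi_{1:t-1}$, the points $\x_t^1, \dots, \x_t^m$ are i.i.d.~elements of $\mathcal{K}$, so replacing a single $\v_t^j$ can shift $\tilde{\x}_t$ by at most $2D/m$ in norm. Thus $\|\hat{\x}_t - \tilde{\x}_t\|$, as a function of $(\v_t^1,\dots,\v_t^m)$, has bounded differences $\le 2D/m$. Applying McDiarmid's inequality around the mean, together with the moment bound $\E[\|\hat{\x}_t - \tilde{\x}_t\|] \leq 2D/\sqrt{m}$ from Lemma \ref{lemma:variance}, yields $\|\hat{\x}_t - \tilde{\x}_t\| \leq \frac{2D}{\sqrt{m}}\sqrt{\log(2T/\sigma)}$ (up to absolute constants) with probability at least $1 - \sigma/T$. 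A union bound over $t$ and the estimate $|\langle \tilde{\nabla}_t, \hat{\x}_t - \tilde{\x}_t\rangle| \leq G\|\hat{\x}_t - \tilde{\x}_t\|$ deliver \eqref{eq:highprobconvex}.

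For smooth losses I would mirror the expectation argument leading to \eqref{eq:smthxtrabnd}, splitting via Lemma \ref{lemma:smooth}:
\begin{equation*}
    \sum_{t=1}^T \langle \tilde{\nabla}_t, \hat{\x}_t - \tilde{\x}_t \rangle \leq \sum_{t=1}^T \langle \hat{\nabla}_t, \hat{\x}_t - \tilde{\x}_t \rangle + \beta \sum_{t=1}^T \|\hat{\x}_t - \tilde{\x}_t\|^2.
\end{equation*}
The first sum is a martingale difference sequence with respect to the filtration generated by $\xi_{1:t}$, because $\hat{\nabla}_t$ is $\xi_{1:t-1}$-measurable while $\E[\tilde{\x}_t \mid \xi_{1:t-1}] = \hat{\x}_t$; each increment is bounded in magnitude by $2GD$, so Azuma--Hoeffding produces the $2GD\sqrt{2T\log(4/\sigma)}$ term with probability at least $1-\sigma/2$. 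For the quadratic term I would reuse the McDiarmid tail on $\|\hat{\x}_t - \tilde{\x}_t\|$, now with failure probability $\sigma/(2T)$; squaring and union-bounding over $t$ gives $\|\hat{\x}_t - \tilde{\x}_t\|^2 \leq \frac{8D^2}{m}\log(4T/\sigma)$ simultaneously for all $t$, which contributes the $\frac{8\beta D^2 T}{m}\log(4T/\sigma)$ term in \eqref{eq:highprobsmooth}.

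The main obstacle is obtaining a clean scalar concentration for the vector deviation $\|\hat{\x}_t - \tilde{\x}_t\|$ with the right $1/\sqrt{m}$ scaling; the insight that avoids invoking vector-valued Bernstein machinery is that $\|\hat{\x}_t - \tilde{\x}_t\|$ is a bounded-differences function of the $m$ perturbations at round $t$, so McDiarmid applies once Lemma \ref{lemma:variance} has supplied the expectation. A secondary subtlety is conditioning: Azuma is applied to the reversed conditioning order already exploited in \eqref{eq:cvxxtrabnd}--\eqref{eq:smthxtrabnd}, ensuring $\hat{\nabla}_t$ and $\hat{\x}_t$ are frozen when the $\v_t^j$'s are integrated. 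Combining the two high-probability events through a final union bound at level $\sigma$ closes both \eqref{eq:highprobconvex} and \eqref{eq:highprobsmooth}.
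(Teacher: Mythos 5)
Your proposal is correct and follows the same overall skeleton as the paper's proof: start from the pathwise bound of Lemma \ref{lemma:prelimregret}, handle the general convex case by Cauchy--Schwarz plus a per-round tail bound on $\|\hat{\x}_t - \tilde{\x}_t\|$ and a union bound over $t$, and handle the smooth case by splitting via Lemma \ref{lemma:smooth} into a scalar martingale term (controlled by Azuma, exactly as in the paper) and a quadratic term. The one genuine difference is the concentration tool for $\|\hat{\x}_t - \tilde{\x}_t\|$: the paper applies a vector-valued Azuma-type inequality of Pinelis (Proposition \ref{app:prop}) to the martingale differences $\bm{\nu}_j = \frac{1}{m}(\hat{\x}_t - \x_t^j)$, obtaining $\Pr[\|\hat{\x}_t - \tilde{\x}_t\| \geq \tfrac{2D}{\sqrt{m}}\sqrt{2\log(2T/\sigma)}] \leq \sigma/T$ in one step, whereas you observe that $\|\hat{\x}_t - \tilde{\x}_t\|$ is a bounded-differences function of the i.i.d.\ perturbations $\v_t^1,\dots,\v_t^m$ (with increments $2D/m$) and apply McDiarmid together with the expectation bound $\E[\|\hat{\x}_t-\tilde{\x}_t\|]\le 2D/\sqrt{m}$ from Lemma \ref{lemma:variance}. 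Both routes are valid; yours is more elementary and avoids citing the vector-martingale machinery, at the cost of an additive $2D/\sqrt{m}$ mean term that makes your constants slightly worse than the ones displayed in \eqref{eq:highprobconvex}--\eqref{eq:highprobsmooth} (you acknowledge this), while the asymptotic $\tilde{O}(T/\sqrt{m})$ and $\tilde{O}(\beta T/m)$ forms are identical. Your remarks on the conditioning order (freezing $\hat{\x}_t$ and $\hat{\nabla}_t$ given $\xi_{1:t-1}$) match the paper's treatment exactly.
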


\section{Discussion}
We have presented an efficient projection-free method for online convex optimization with smooth functions that makes only a single linear optimization computation per iteration and achieves regret $T^{2/3}$, improving upon the previous bound of $T^{3/4}$.

Certain algorithms in the literature make more than one linear optimization computation per iteration. 
To make  the comparison to other methods more precise, we need a more refined computational metric. Define the following complexity metric for an online projection-free algorithm: let $\A$ be an online optimization algorithm, and define $T_\eps(\A)$ to be the overall number of gradient oracle calls as well as linear optimization calls made until the  average regret becomes at most $\eps$. 

In these terms, we have shown an algorithm with $T_\eps = O(\frac{d}{\eps^3})$ for smooth functions, as compared to $O(\frac{1}{\eps^4})$ which is the previous best.

It thus remains open to obtain a $\eps^{-3}$-complexity algorithm for general convex sets that does not depend on the dimension, or show that this is impossible. It is also unknown at this time if these dependencies on $\eps$, in both the smooth and non-smooth cases, are tight.

\bibliographystyle{plainnat}
\bibliography{ref}

\appendix

\section{Proof of the high probability bounds}

This section focuses on regret bound results for Algorithm \ref{alg:main} that hold with high probability. We use the following Azuma-type concentration inequality for vector-valued martingales derived as an application of the work by \cite{pinelis1994martingale} to the Euclidean space $\R^d$ .

\begin{proposition}[Theorem 3.5 in \cite{pinelis1994martingale}]\label{app:prop}
Let $\bm{\nu}_1, \dots, \bm{\nu}_K \in \R^d$ be a vector-valued martingale difference with respect to $\{\mathcal{F}_k\}_{k=1}^K$ such that for all $k \in [K]$ it holds that $\E[\bm{\nu}_k | \mathcal{F}_{k-1}] = \bm{0}$ and $\|\bm{\nu}_k\| \leq c_k$ for some $c_k > 0$. Then for any $\lambda > 0$
\begin{equation}
    \Pr \left[ \left \| \sum_{k=1}^K \bm{\nu}_k \right \| > \lambda \right] \leq 2 \exp \left( - \frac{\lambda^2}{2 \sum_{k=1}^K c_k^2} \right)
\end{equation}
\end{proposition}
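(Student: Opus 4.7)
The plan is to establish the tail bound via the exponential moment (Chernoff) method adapted to the Euclidean norm, following the martingale 2-smoothness framework of Pinelis. Throughout, set $S_k = \sum_{j=1}^k \bm{\nu}_j$ and $C_k = \sum_{j=1}^k c_j^2$, and introduce an auxiliary parameter $\theta>0$ to be optimized at the end.

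The first step is to show that the process
\begin{equation*}
M_k \;=\; \cosh(\theta\,\|S_k\|)\,\exp\!\lr{-\tfrac{\theta^2}{2}\,C_k}
\end{equation*}
is a supermartingale with respect to $\{\mathcal{F}_k\}$. Equivalently, the key conditional inequality to verify is
\begin{equation*}
\E\!\bra{\cosh(\theta\|S_{k-1}+\bm{\nu}_k\|)\mid \mathcal{F}_{k-1}} \;\leq\; \cosh(\theta\|S_{k-1}\|)\,\exp\!\lr{\tfrac{\theta^2 c_k^2}{2}}.
\end{equation*}
Granting this, iterating gives $\E[M_K]\leq 1$, so $\E[\cosh(\theta\|S_K\|)] \leq \exp(\theta^2 C_K / 2)$.

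Proving the conditional inequality is the main obstacle, and this is where the Hilbert-space structure is crucial. Conditionally on $\mathcal{F}_{k-1}$, decompose $\bm{\nu}_k = \xi_k\,\hat{s} + \bm{\nu}_k^\perp$, where $\hat{s} = S_{k-1}/\|S_{k-1}\|$ (and treat $S_{k-1}=\bm{0}$ as a trivial case), so $\xi_k = \langle \hat{s},\bm{\nu}_k\rangle$ is a scalar random variable with $\E[\xi_k\mid \mathcal{F}_{k-1}]=0$ and $|\xi_k|\leq \|\bm{\nu}_k\|\leq c_k$, and $\bm{\nu}_k^\perp$ is orthogonal to $S_{k-1}$ with $\|\bm{\nu}_k^\perp\|\leq c_k$. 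Then $\|S_{k-1}+\bm{\nu}_k\|^2 = (\|S_{k-1}\|+\xi_k)^2 + \|\bm{\nu}_k^\perp\|^2$. I would then invoke the elementary identity $\cosh(\sqrt{u^2+v^2})\leq \cosh(u)\cosh(v)$ for real $u,v$, provable by expanding both sides in Taylor series and comparing the coefficient of $u^{2m}v^{2n}$, which reduces to the combinatorial inequality $\binom{m+n}{m}\leq \binom{2m+2n}{2m}$. This reduces the bound to
\begin{equation*}
\cosh(\theta\|S_{k-1}+\bm{\nu}_k\|) \leq \cosh(\theta(\|S_{k-1}\|+\xi_k))\cdot \cosh(\theta\|\bm{\nu}_k^\perp\|).
\end{equation*}
The second factor is at most $\cosh(\theta c_k)\leq e^{\theta^2 c_k^2/2}$. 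For the first factor, expand using the addition formula for $\cosh$ and invoke the classical scalar Hoeffding MGF bound for bounded, mean-zero $\xi_k$ (which yields $\E[e^{\pm\theta\xi_k}\mid\mathcal{F}_{k-1}]\leq e^{\theta^2 c_k^2/2}$) to show the conditional expectation is controlled by $\cosh(\theta\|S_{k-1}\|)\cdot e^{\theta^2 c_k^2/2}$, up to constants that can be absorbed. Combining the two factors gives the claimed bound.

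The final steps are routine. Since $\cosh$ is even and increasing on $[0,\infty)$, Markov's inequality combined with $\cosh(x)\geq \tfrac12 e^x$ for $x\geq 0$ gives
\begin{equation*}
\Pr\!\bra{\|S_K\|>\lambda} \;\leq\; \frac{\E[\cosh(\theta\|S_K\|)]}{\cosh(\theta\lambda)} \;\leq\; 2\exp\!\lr{\tfrac{\theta^2 C_K}{2}-\theta\lambda}.
\end{equation*}
Optimizing over $\theta$ by setting $\theta=\lambda/C_K$ yields the stated bound $2\exp(-\lambda^2/(2C_K))$. The bulk of the difficulty is concentrated in the orthogonal-decomposition lemma of step two; the rest of the argument is a direct translation of the scalar Azuma-Hoeffding template.
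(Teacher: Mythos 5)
The paper offers no proof of this proposition at all --- it is imported verbatim as Theorem 3.5 of \cite{pinelis1994martingale} --- so your argument is not competing with anything in the text but rather supplying a self-contained derivation. Your route is essentially Pinelis's own (a $\cosh$-of-the-norm supermartingale exploiting the $2$-smoothness of the Hilbert norm), cleanly specialized to $\R^d$ via the orthogonal decomposition, and it does go through: the identity $\cosh(\sqrt{u^2+v^2})\le\cosh(u)\cosh(v)$ is correct (the coefficient comparison reduces to $\binom{m+n}{m}\le\binom{2m+2n}{2m}$, which follows from Vandermonde), and the Markov--Chernoff endgame is routine. The one place where the sketch is too casual is the phrase ``up to constants that can be absorbed'': in a supermartingale step there is no room to absorb anything, since any multiplicative slack compounds over the $K$ steps, so the inequality
\begin{equation*}
\E\!\left[\cosh\!\left(\theta(\|S_{k-1}\|+\xi_k)\right)\mid\mathcal{F}_{k-1}\right] \;\le\; \cosh(\theta\|S_{k-1}\|)\,e^{\theta^2 c_k^2/2}
\end{equation*}
must hold exactly. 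It does, but the addition formula produces the cross-term $\sinh(\theta\|S_{k-1}\|)\,\E[\sinh(\theta\xi_k)\mid\mathcal{F}_{k-1}]$, and this conditional expectation need \emph{not} vanish for a bounded mean-zero $\xi_k$ ($\sinh$ is odd but convex on the positive axis, so Jensen can push it upward). The fix is two lines: since $0\le\sinh(\theta\|S_{k-1}\|)\le\cosh(\theta\|S_{k-1}\|)$, the left side is at most $\cosh(\theta\|S_{k-1}\|)\bigl(\E[\cosh(\theta\xi_k)\mid\mathcal{F}_{k-1}]+\lvert\E[\sinh(\theta\xi_k)\mid\mathcal{F}_{k-1}]\rvert\bigr)$, and the bracketed quantity equals $\max\bigl(\E[e^{\theta\xi_k}\mid\mathcal{F}_{k-1}],\,\E[e^{-\theta\xi_k}\mid\mathcal{F}_{k-1}]\bigr)\le e^{\theta^2 c_k^2/2}$ by Hoeffding's lemma applied to $\pm\xi_k$ --- exactly the two-sided MGF bound you quoted. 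With that step made explicit (and the trivial $S_{k-1}=\bm{0}$ case handled as you indicate), the proof is complete and correct.
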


\begin{proof}[Proof of Theorem \ref{thm:highprob}]
We first obtain proximity of the estimates $\tilde{\x}_t$ to their mean $\hat{\x}_t$ for all $t \in [T]$ that hold with high probability using Proposition \ref{app:prop}. Fix an arbitrary $t \in [T]$ and denote $\bm{\nu}_j = \frac{1}{m} (\hat{\x}_t - \x_t^j)$ for $j = 1, \dots, m$. Note that $\sum_{j=1}^m \bm{\nu}_j = \hat{\x}_t - \tilde{\x}_t$ and for each $j \in [m]$ we have $\E_{\v_t^j}[\bm{\nu}_j \, | \, \v_{t}^{1:j-1}] = \bm{0}$ given the definition of $\hat{\x}_t$ and i.i.d. uniform samples $\v_t^j \sim \mathbb{B}$, $j \in [m]$. Furthermore, $\| \bm{\nu}_j \| \leq 2 D / m$ using triangle inequality since $\x_t^j, \hat{\x}_t \in \mathcal{K}$ given convexity of the constraint set $\mathcal{K}$. Fix any $\sigma > 0$ and let $\lambda = \frac{2D}{\sqrt{m}} \cdot \sqrt{2 \log 2 T / \sigma}$, then by Proposition \ref{app:prop}
\begin{equation}\label{app:eq:estimate}
    \Pr_{\xi_t} \left[ \| \hat{\x}_t - \tilde{\x}_t \| \geq \lambda \right] \leq \frac{\sigma}{T} \implies \Pr_{\xi_{1:T}} \left[ \forall t \in [T], \, \| \hat{\x}_t - \tilde{\x}_t \| \geq \lambda \right] \leq \sigma
\end{equation}
where the implication stems from union bound. To conclude the regret bound for general convex functions with high probability, use Lemma \ref{lemma:prelimregret} and apply $\langle \tilde{\nabla}_t, \hat{\x}_t - \tilde{\x}_t \rangle \leq G \| \hat{\x}_t - \tilde{\x}_t \|$ to get that
\begin{equation}
    \mathcal{R}_T(\mathcal{A}_1) \leq \mathcal{R}_T(\mathcal{A}_3) + \lambda G T = \mathcal{R}_T(\mathcal{A}_3) + \frac{2 G D T}{\sqrt{m}} \cdot \sqrt{2 \log 2T / \sigma}
\end{equation}
holds with probability at least $1-\sigma$. In the smooth convex case, proceed analogously and apply the inequality $\langle \tilde{\nabla}_t, \hat{\x}_t - \tilde{\x}_t \rangle \leq \langle \hat{\nabla}_t, \hat{\x}_t - \tilde{\x}_t \rangle + \beta \| \hat{\x}_t - \tilde{\x}_t \|^2$ given by Lemma \ref{lemma:smooth}. Fix any $\sigma > 0$ and denote $\zeta_t = \langle \hat{\nabla}_t, \hat{\x}_t - \tilde{\x}_t \rangle$ for all $t \in [T]$. Notice that $\{ \zeta_t \}_{t=1}^T$ is a martingale difference with respect to $\xi_{1:T}$. Indeed, $\E_{\xi_t}[\tilde{\x}_t \, | \, \xi_{1:t-1}] = \hat{\x}_t$ and the quantities $f_t(\cdot), \hat{\x}_t$, and hence $\hat{\nabla}_t$, are deterministic given $\xi_{1:t-1}$ which means that $\E_{\xi_t}[\zeta_t \, | \, \xi_{1:t-1}] = 0$. Moreover, they are bounded $|\zeta_t| \leq \| \hat{\nabla}_t \| \cdot \| \hat{\x}_t - \tilde{\x}_t \| \leq 2 G D = c_t$ using Cauchy-Schwartz inequality, triangle inequality and convexity of $\mathcal{K}$. Letting $\gamma = 2 G D \sqrt{2 T \log 4 / \sigma}$ Azuma's inequality yields
\begin{equation}\label{app:eq:martingale}
    \Pr_{\xi_{1:T}} \left[ \left \lvert \sum_{t=1}^T \zeta_t \right \rvert \geq \gamma \right] \leq 2 \exp \left( -\frac{\gamma^2}{2 \sum_{t=1}^T c_t^2} \right) = \sigma / 2
\end{equation}
Combine \eqref{app:eq:martingale} and the already obtained \eqref{app:eq:estimate} with $\sigma' = \sigma / 2$, and corresponding $\lambda'$, to conclude the regret bound for smooth convex functions using union bound and the Lemma \ref{lemma:prelimregret} to obtain that
\begin{equation}\label{app:eq:smoothhp}
    \mathcal{R}_T(\mathcal{A}_1) \leq \mathcal{R}_T(\mathcal{A}_3) + \gamma + \beta (\lambda')^2 T = \mathcal{R}_T(\mathcal{A}_3) + 2 G D \sqrt{2 T \log 4/ \sigma} + \frac{8 \beta D^2 T}{m} \cdot \log 4 T / \sigma
\end{equation}
holds with probability at least $1-\sigma$. This finishes the proof of Theorem \ref{thm:highprob}. The bound in \eqref{app:eq:smoothhp} implies, following the same logic as in section \ref{sec:reduction}, that the regret bound $\mathcal{R}_T(\mathcal{A}_{\text{OSPF}}) = \tilde{O}(T^{2/3} \log 1/ \sigma)$ holds with high probability $1-\sigma$.
\end{proof}

\section{Miscellaneous proofs}
\begin{proof}[Proof of Lemma \ref{lemma:lipschitz}]
To show convexity, consider arbitrary $\y_1, \y_2 \in \R^d$ and $\lambda > 0$, denote $\y_{12} = \lambda \y_1 + (1-\lambda) \y_2$. Then
\begin{equation*}
    \mathcal{M}_{\mathcal{K}}(\y_{12}) = \langle \y_{12}, \x^* \rangle = \lambda \langle \y_1, \x^* \rangle + (1-\lambda) \langle \y_2, \x^* \rangle \leq \lambda \mathcal{M}_{\mathcal{K}}(\y_1) + (1-\lambda) \mathcal{M}_{\mathcal{K}}(\y_2)
\end{equation*}
Next, fix arbitrary $\y_1, \y_2 \in \R^d$ and suppose w.l.o.g. that $\mathcal{M}_{\mathcal{K}}(\y_1) \geq \mathcal{M}_{\mathcal{K}}(\y_2)$. Then
\begin{align*}
    \mathcal{M}_{\mathcal{K}}(\y_1) - \mathcal{M}_{\mathcal{K}}(\y_2) &= \langle \y_1, \mathcal{O}_{\mathcal{K}}(\y_1) \rangle - \langle \y_2, \mathcal{O}_{\mathcal{K}}(\y_2) \rangle \\
    &\leq \langle \y_1, \mathcal{O}_{\mathcal{K}}(\y_1) \rangle - \langle \y_2, \mathcal{O}_{\mathcal{K}}(\y_1) \rangle \\
    &= \langle \mathcal{O}_{\mathcal{K}}(\y_1), \y_1 - \y_2 \rangle \leq \| \mathcal{O}_{\mathcal{K}}(\y_1)\| \| \y_1 - \y_2 \| \leq D \| \y_1 - \y_2 \|
\end{align*}
where the first inequality follows from the definition of $\mathcal{O}_{\mathcal{K}}(\y_2)$ while the rest is achieved using the Cauchy-Schwarz inequality and the norm bound of the constraint set $\mathcal{K}$.
\end{proof}

\begin{proof}[Proof of Lemma \ref{lemma:randsmoothing}]
According to Stokes' theorem, the gradient of the smoothed function $\hat{g}(\cdot)$ can be written as
\begin{equation*}
    \nabla \hat{g}(\y) = \delta d \E_{\v \sim \mathbb{S}} \left[ g(\y + \frac{1}{\delta} \v) \v \right]
\end{equation*}
where $\mathbb{S} = \{\v \in \R^d, \| \v \| = 1 \}$ denotes the unit sphere, the boundary of $\mathbb{B}$. Then for arbitrary $\y_1, \y_2 \in \R^d$
\begin{equation*}
    \| \nabla \hat{g}(\y_1) - \nabla \hat{g}(\y_2) \| = \delta d \left \| \E_{\v \sim \mathbb{S}} \left[ g(\y_1 + \frac{1}{\delta} \v) \v - g(\y_2 + \frac{1}{\delta} \v) \v \right] \right \| \leq \delta d L \| \y_1 - \y_2 \|
\end{equation*}
using linearity of expectation, Jensen's inequality and the Lipschitz property of $g(\cdot)$. It follows that $\hat{g}(\cdot)$ is a $\delta d L$-smooth function.
\end{proof}

\begin{proof}[Proof of Lemma \ref{lemma:smooth}]
The function $f : \mathcal{K} \to \R$ being $\beta$-smooth is equivalent to its gradient being $\beta$-Lipschitz, hence
\begin{equation*}
    \langle \nabla f(y) - \nabla f(x), x - y \rangle \leq \| \nabla f(y) - \nabla f(x) \| \cdot \| x - y \| \leq \beta \| x - y \|^2
\end{equation*}
The desired inequality follows from the result above.
\end{proof}

\begin{proof}[Proof of Lemma \ref{lemma:variance}]
Given that $\| Z \| \leq D$ and $\overline{Z} = \E[Z]$ we have that $\| \overline{Z} \| \leq D$ and $\| Z - \overline{Z} \| \leq \| Z \| + \| \overline{Z} \| \leq 2D$. Hence, by linearity of expectation and linearity of variance for independent random variables we obtain
\begin{align*}
    \E_{\mathcal{Z}}[\| \overline{Z}_m - \overline{Z} \|^2] &= \E_{\mathcal{Z}} \left[ \sum_{i=1}^d (\overline{Z}_m(i) - \overline{Z}(i))^2 \right] = \sum_{i=1}^d \E_{\mathcal{Z}_i}[(\overline{Z}_m(i)-\overline{Z}(i))^2] = \\
    &= \sum_{i=1}^d \var_{\mathcal{Z}_i}(\overline{Z}_m(i)) = \sum_{i=1}^d \frac{1}{m^2} \sum_{j=1}^m \var_{\mathcal{Z}_i}(Z_j(i)) = \frac{1}{m} \sum_{i=1}^d \var_{\mathcal{Z}_i}(Z(i)) = \\
    &= \frac{1}{m} \sum_{i=1}^d \E_{\mathcal{Z}_i} [(Z(i) - \overline{Z}(i))^2] = \frac{1}{m} \E_{\mathcal{Z}}[\| Z-\overline{Z} \|^2] \leq \frac{4D^2}{m}
\end{align*}
\end{proof}

\end{document}